\theoremstyle{plain}
\newtheorem{theorem}{Theorem}[section]
\newtheorem{proposition}[theorem]{Proposition}
\newtheorem{lemma}[theorem]{Lemma}
\newtheorem{corollary}[theorem]{Corollary}
\theoremstyle{definition}
\newtheorem{definition}[theorem]{Definition}
\newtheorem{assumption}[theorem]{Assumption}
\theoremstyle{remark}
\newtheorem{remark}[theorem]{Remark}
\definecolor{citecolor}{HTML}{B67638}
\icmltitlerunning{Kernel-based Unsupervised Alignment of CLIP and DINOv2 Embeddings}
\begin{document}

\twocolumn[
\icmltitle{Kernel-based Unsupervised Embedding Alignment for \\
Enhanced Visual Representation in Vision-language Models}




\begin{icmlauthorlist}
\icmlauthor{Shizhan Gong}{sch}
\icmlauthor{Yankai Jiang}{comp}
\icmlauthor{Qi Dou}{sch}
\icmlauthor{Farzan Farnia}{sch}
\end{icmlauthorlist}

\icmlaffiliation{comp}{Shanghai Artificial Intelligence Laboratory, Shanghai, China}
\icmlaffiliation{sch}{Department of Computer Science and Engineering, The Chinese University of Hong Kong, Hong Kong SAR, China}

\icmlcorrespondingauthor{Shizhan Gong}{szgong22@cse.cuhk.edu.hk}
\icmlcorrespondingauthor{Yankai Jiang}{jiangyankai@pjlab.org.cn}
\icmlcorrespondingauthor{Qi Dou}{qidou@cuhk.edu.hk}
\icmlcorrespondingauthor{Farzan Farnia}{farnia@cse.cuhk.edu.hk}

\icmlkeywords{Machine Learning, ICML}

\vskip 0.3in
]



\printAffiliationsAndNotice{}  

\begin{abstract}
Vision-language models, such as CLIP, have achieved significant success in aligning visual and textual representations, becoming essential components of many multi-modal large language models (MLLMs) like LLaVA and OpenFlamingo. However, numerous studies have identified CLIP's limited fine-grained perception as a critical drawback, leading to substantial failures in downstream MLLMs. In contrast, vision-centric foundation models like DINOv2 demonstrate remarkable capabilities in capturing fine details from images. In this work, we propose a novel kernel-based method to align CLIP's visual representation with that of DINOv2, ensuring that the resulting embeddings maintain compatibility with text embeddings while enhancing perceptual capabilities. Our alignment objective is designed for efficient stochastic optimization. Following this image-only alignment fine-tuning, the visual encoder retains compatibility with the frozen text encoder and exhibits significant improvements in zero-shot object recognition, fine-grained spatial reasoning, and localization. By integrating the aligned visual encoder, downstream MLLMs also demonstrate enhanced performance. The code and models are available at \url{https://github.com/peterant330/KUEA}.
\end{abstract}
\section{Introduction}

Vision-language Models (VLMs) have made significant strides and transformed the field of computer vision. A notable example is CLIP~\cite{radford2021learning} and its variants~\cite{zhai2023sigmoid,EVA-CLIP}, which are trained on extensive datasets of paired text and images to link images with their corresponding textual descriptions. These models demonstrate exceptional generalizability and zero-shot performance on various downstream tasks, including classification~\cite{saha2024improved}, segmentation~\cite{yu2023convolutions}, and object detection~\cite{vidit2023clip}. Beyond functioning as a standalone tool, CLIP's vision encoder has been incorporated into several Multi-modal Large Language Models (MLLMs), such as LLaVA~\cite{liu2024visual}, OpenFlamingo~\cite{awadalla2023openflamingo}, BLIP-2~\cite{li2023blip}, and Qwen-VL~\cite{bai2023qwen}, serving as an integral component for visual feature extraction.

\begin{figure*}[t]
\centering
\includegraphics[width=\textwidth]{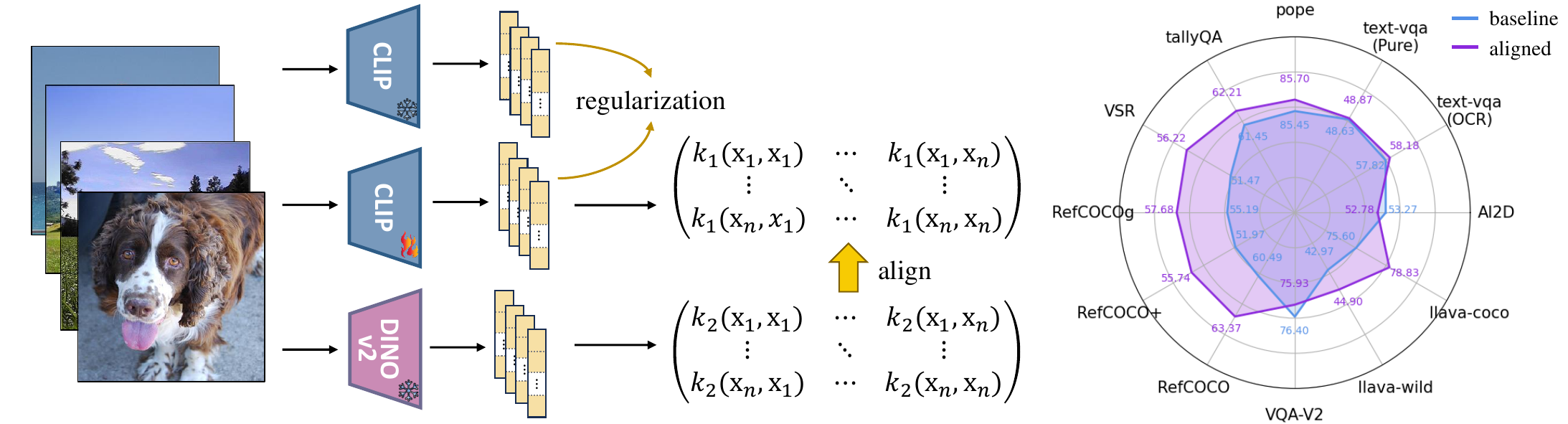}
\caption{\textbf{Main claim of this work.} We propose a kernel-based alignment framework, which is able to enhance the visual representation of CLIP via image-only fine-tuning. Moreover, the improvement can be transferred to downstream multi-modal large language models.}
\label{teaser}
\vspace{-1mm}
\end{figure*}

However, because CLIP is trained with global supervision from image captions, it struggles to learn finer pixel-level details such as color~\cite{Thrush_2022_CVPR} and spatial relationship~\cite{liu2023visual,kamath2023whats}. This limitation affects CLIP's performance on vision-focused tasks and can also impair the fine-grained perception capabilities of downstream MLLMs. Numerous studies have highlighted these issues~\cite{yuksekgonul2023when,guo2024llava}. For instance, \citet{jiang2023clip} examined feature visualization from deeper layers and found that it emphasizes global image properties while neglecting intricate details. Similarly, \citet{tong2024eyes} reported that current MLLMs struggle with simple visual pattern questions, such as counting, color identification, and viewpoint recognition.

Vision-only self-supervised learning protocols, such as DINO~\cite{caron2021emerging} and MAE~\cite{he2022masked}, produce vision-centric representations that are highly effective for visual grounding. Researchers are exploring these models to address the inherent limitations of CLIP. Some studies have combined multiple models as vision encoders for MLLMs~\cite{jiang2023clip,tong2024eyes,shi2024eagle,shen2024mome}. However, this approach introduces considerable computational overhead. Other methods focus on using vision-only self-supervised learning to fine-tune the CLIP encoder~\cite{wu2023clipself,covert2024locality}, which can enhance its localization abilities. Nevertheless, these fine-tuning techniques risk disrupting the alignment between image and text representations, potentially undermining CLIP's zero-shot performance. Additionally, there are efforts to integrate region-based task loss~\cite{zhong2022regionclip,wan2024locca} or distillation loss from vision models~\cite{salehi2023clip} during CLIP's training, which requires re-training the models and incurs high computational costs. Most critically, all these methods produce visual embeddings that differ significantly from the original CLIP embeddings, leading to incompatibility with downstream models trained on the original embeddings. Consequently, all downstream models (e.g., MLLMs) would need to be re-tuned from scratch to align visual-text embeddings. The cost makes these solutions impractical in real applications.

In this paper, we present an innovative approach to address the limitations of CLIP embeddings by aligning them with embeddings from vision-centric models. Our method involves fine-tuning CLIP's vision branch to align with the representations of target models, such as DINOv2~\cite{oquab2024dinov2}, while maintaining compatibility with CLIP's text embeddings, which is totally frozen and untouched (see Fig.~\ref{teaser}). Given the significant differences between the feature spaces of target models and CLIP, a direct alignment of the visual representations could disrupt the alignment with text representations. Instead, we propose to align the embeddings in the kernel space, which preserves the integrity of the original feature space while allowing for flexible adjustments of similarities among samples based on their visual details. As a result, this alignment could enhance the vision encoder's ability to recognize fine-grained visual patterns. In addition, we design an optimization objective that can be handled by stochastic optimization, enabling our framework to scale effectively to real-world datasets with minimal computational hardware requirements.

We examine the enhancements achieved through the proposed alignment in both vision-centric tasks and visual question answering (VQA). Experiments conducted on multiple CLIP-benchmark~\cite{clip_benchmark} and probing bench~\cite{covert2024locality} reveal that the CLIP after alignment demonstrates improved accuracy in zero-shot classification and dense prediction tasks, without requiring fine-tuning of the text encoder. Subsequently, we integrate the aligned CLIP vision encoder into two pre-trained MLLMs, LLaVA~\cite{liu2024visual} and OpenFlamingo~\cite{awadalla2023openflamingo}, evaluating their performance across several standard VQA benchmarks. This also results in significant enhancements over the original CLIP, even without fine-tuning the large language model (LLM) component. Our main contributions can be summarized as follows.
\begin{itemize}[leftmargin=*]
\item We introduce a kernel-based alignment method that effectively aligns two sets of embeddings while preserving the integrity of the original representation space.
\item We implement the alignment using CLIP and DINOv2. With minimal image-only fine-tuning, the aligned CLIP visual encoder shows substantial improvements while preserving compatibility with the text encoder and ensuring zero-shot generalizability.
\item Evaluation across various vision-language benchmarks shows that the enhancements in visual representations can be effectively inherited to downstream MLLMs.
\end{itemize}

\section{Related Work}

\noindent\textbf{Kernel Methods in Representation Learning.} Kernel methods~\cite{hofmann2008kernel} are algorithms that use the kernel trick, which allows linear algorithms to work in high-dimensional feature spaces without explicit data transformation. It is widely used in deep representation learning. For example,
\citet{he2022feature} utilized kernel-based metrics for evaluating the similarity between two embeddings. Several work~\cite{allen2020towards,he2022feature,zhou2024rethinking} leveraged kernel functions for knowledge distillation. \citet{dehdashtian2024fairerclip} proposed to de-bias CLIP’s image and text representations in reproducing kernel Hilbert spaces for better fairness. Kernel methods have also been applied to evaluating the fidelity~\cite{binkowski2018demystifying}, diversity~\cite{friedman2022vendi,jalali2023information,rezaei2024more,ospanov2024statistical,jalali2024information,ospanov2024dissecting,ospanov2025towards}, and novelty~\cite{zhang2024,zhang2024interpretable} of generative models. Our method can be interpreted as aligning visual representations of CLIP and other vision models in the kernel space, through kernel trick.

\noindent\textbf{Fine-tuning Vision-Language Models.}
Fine-tuning of VLMs has been widely applied to improve the model performance on specific downstream dataset~\cite{zhou2022learning,zhang2022tip,gao2024clip} or domain-specific applications~\cite{cao2024domain,gong20243dsam}. Beyond the accuracy gain, studies have also discovered that well-designed fine-tuning scheme can promote some desired properties of the model, such as adversarial robustness~\cite{mao2022understanding,schlarmann2024robust}, fairness~\cite{shen2023finetuning}, and visual interpretability~\cite{gong2025boosting}. In this work, we leverage fine-tuning to enhance the general visual ability of the VLMs. Instead of focusing on single datasets, we show the enhancement exhibit good generalizability. 

\noindent\textbf{Enhancement of CLIP Visual Representations.}
Several studies have proposed methods to address the limitations of CLIP's visual embeddings and enhance its fine-grained capabilities. For instance, \citet{salehi2023clip} trained the CLIP encoder using multi-task losses supervised by pseudo labels generated from other vision encoders. Similarly, \citet{jiang2023clip} and \citet{tong2024eyes} complemented CLIP with DINOv2 as the vision encoder for downstream MLLMs. \citet{shi2024eagle} and \citet{shen2024mome} further expanded this approach to incorporate a wider variety of vision encoders. Currently, there are limited explorations focusing on the fine-tuning phase. \citet{covert2024locality} applied masked fine-tuning to CLIP's vision encoder to enhance its localization capabilities. However, their fine-tuned vision encoders became incompatible with the text encoder and downstream LLMs. The method most closely related to ours is DIVA~\cite{wang2024diffusion}, which refines CLIP representations using only images with the help of diffusion models. Comparing to DIVA, our method demands significantly less computation and achieves better zero-shot performance.
\section{Method}

\subsection{Kernel Function and Kernel Matrix}

Consider a function $k:\mathbb{R}^d\times \mathbb{R}^d \rightarrow \mathbb{R}$ that assigns a similarity score $k(\mathbf{x}, \mathbf{y}) \in \mathbb{R}$ to every pair of vectors $\mathbf{x}, \mathbf{y} \in \mathbb{R}^d$. The function $k$ qualifies as a kernel function if and only if, for any set of samples $\mathbf{x}_1, \cdots, \mathbf{x}_n \in \mathbb{R}^d$, the resulting kernel matrix $\mathbf{K}=[k(\mathbf{x}_i, \mathbf{x}_j)]_{1\leq i, j \leq n}$ is positive semi-definite. This kernel matrix well captures the pairwise similarities among the samples. Common examples of kernel functions include Gaussian kernel, cosine kernel, and polynomial kernel. In this work, we mainly focus on the polynomial kernel with coefficient $\gamma$, constant offset $c$, and degree $d$:
\begin{equation}
k_{\text{polynomial}(\gamma, c, d)}(\mathbf{x}, \mathbf{y}):=(\gamma\mathbf{x}^T\mathbf{y}+c)^d.
\end{equation}

We call a kernel function normalized if $k(\mathbf{x}, \mathbf{x})=1$ for any vector $\mathbf{x} \in \mathbb{R}^d$. A kernel function can be normalized via:
\begin{equation}
\tilde{k}(\mathbf{x}, \mathbf{y}) = \frac{k(\mathbf{x}, \mathbf{y})}{\sqrt{k(\mathbf{x}, \mathbf{x})k(\mathbf{y}, \mathbf{y})}}.
\end{equation}
Note that for each valid kernel function $k$, there exists a feature map $\phi:\mathbb{R}^d \rightarrow \mathbb{R}^s$ such that for every input vectors $\mathbf{x}, \mathbf{y}$, the following holds: 
\begin{equation}
    k(\mathbf{x}, \mathbf{y}) = \langle \phi(\mathbf{x}), \phi(\mathbf{y}) \rangle,
\end{equation}
where $\langle\cdot, \cdot\rangle$ denotes the inner product in $\mathbb{R}^s$. $s$ is usually much greater than $d$, indicating that 
$\phi$ maps the input features into a higher-dimensional kernel space.

 \begin{table*}[tp]
\centering
\renewcommand\arraystretch{1}
    \caption{\textbf{Accuracy evaluation on zero-shot image classification benchmarks of CLIP models w/wo alignment}. \textit{Projection} means training a linear layer to map DINOv2 representations into the CLIP feature space. \textit{Feature} means directing aligning the representation pairs subjecting to a linear transformation. \textit{DIVA} refers to the method proposed by~\citet{wang2024diffusion}. \textit{Kernel} is our proposed method.}
    \label{zeroshot}
\begin{center}
\footnotesize
\resizebox{1\textwidth}{!}{
\begin{tabular}{c|l|c|ccccccccccc|c}
\toprule
&\multirow{2}{*}[-25pt]{\adjustbox{stack=cc}{Alignment\\Strategies}}&\multirow{2}{*}[-13pt]{\rotatebox[origin=c]{75}{ImageNet}}&\multicolumn{11}{c|}{Zero-shot Datasets}&\multirow{2}{*}[-25pt]{\adjustbox{stack=cc}{Average\\Zero-shot}} \\
\cline{4-14}
&&&\rotatebox[origin=c]{75}{CIFAR10}&\rotatebox[origin=c]{75}{CIFAR100}&\rotatebox[origin=c]{75}{CalTech}&\rotatebox[origin=c]{75}{FER}&\rotatebox[origin=c]{75}{OxfordPets}&\rotatebox[origin=c]{75}{DTD}&\rotatebox[origin=c]{75}{RESISC}&\rotatebox[origin=c]{75}{EuroSAT}&\rotatebox[origin=c]{75}{PCAM}&\rotatebox[origin=c]{75}{ImageNet-S}&\rotatebox[origin=c]{75}{ImageNet-O}\\
\midrule
\midrule
\multirow{4}{*}{\rotatebox[origin=c]{90}{ViT-B-16}}&CLIP&67.72&89.98&65.61&82.17&46.35&89.04&44.95&58.19&55.89&50.70&48.28&42.30&61.22 \\
&projection&\textbf{68.96}&\textbf{96.24}&\textbf{75.30}&73.67&27.19&80.70&34.26&33.27&40.44&50.02&\textbf{54.73}&30.00&54.17\\
&feature&67.84&90.47&66.46&82.19&46.36&89.02&45.00&\textbf{58.29}&56.04&50.79&48.31&42.60&61.41\\
&{\cellcolor{Bittersweet!8}}kernel&{\cellcolor{Bittersweet!8}}67.84&{\cellcolor{Bittersweet!8}}91.13&{\cellcolor{Bittersweet!8}}67.38&{\cellcolor{Bittersweet!8}}\textbf{82.20}&{\cellcolor{Bittersweet!8}}\textbf{46.82}&{\cellcolor{Bittersweet!8}}\textbf{89.23}&{\cellcolor{Bittersweet!8}}\textbf{45.43}&{\cellcolor{Bittersweet!8}}57.73&{\cellcolor{Bittersweet!8}}\textbf{56.85}&{\cellcolor{Bittersweet!8}}\textbf{54.02}&{\cellcolor{Bittersweet!8}}48.18&{\cellcolor{Bittersweet!8}}\textbf{43.50}&{\cellcolor{Bittersweet!8}}\textbf{62.04}\\
\midrule
\multirow{5}{*}{\rotatebox[origin=c]{90}{ViT-L-14}}&CLIP&74.90&95.20 
&71.08&83.30&50.00&93.21&55.21 &63.35&62.65&52.00&59.59 &32.25&65.26\\
&projection&70.42&95.10&74.14&80.31&24.51&80.02&37.02&37.03&32.48&50.02&56.04&28.05&54.07\\
&feature&75.16&95.77&75.13&83.58&49.62&93.35&55.73&63.13&63.11&51.34&58.08&34.20&65.73\\
&DIVA&74.84&95.10&70.96&84.01&49.47&\textbf{93.57}&55.05&63.48&62.93&\textbf{53.79}&59.53&32.05&65.45\\
&{\cellcolor{Bittersweet!8}}kernel&{\cellcolor{Bittersweet!8}}\textbf{75.52}&{\cellcolor{Bittersweet!8}}\textbf{96.27}&{\cellcolor{Bittersweet!8}}\textbf{77.04}&{\cellcolor{Bittersweet!8}}\textbf{84.32}&{\cellcolor{Bittersweet!8}}\textbf{50.31}&{\cellcolor{Bittersweet!8}}93.40&{\cellcolor{Bittersweet!8}}\textbf{55.74}&{\cellcolor{Bittersweet!8}}\textbf{63.83}&{\cellcolor{Bittersweet!8}}\textbf{63.83}&{\cellcolor{Bittersweet!8}}52.68&{\cellcolor{Bittersweet!8}}\textbf{59.67}&{\cellcolor{Bittersweet!8}}\textbf{34.90}&{\cellcolor{Bittersweet!8}}\textbf{66.54}\\
\midrule
\multirow{5}{*}{\rotatebox[origin=c]{90}{ViT-L-14-336}}&CLIP&75.82&94.49&71.11&83.42&49.04&93.70&55.64&63.73&61.46&\textbf{60.68}&61.03&32.75&66.10\\
&projection&71.26&94.87&\textbf{77.35}&72.33&23.35&79.39&39.20&42.06&35.06&50.02&57.53&28.65&54.53\\
&feature&75.36&95.45&74.16&83.45&\textbf{50.24}&93.79&56.22&64.37&61.22&59.82&60.40&32.55&66.52\\
&DIVA&75.54&93.74&70.73&\textbf{83.63}&50.07&93.32&55.48&63.19&60.13&59.52&56.90&31.65&65.30\\
&{\cellcolor{Bittersweet!8}}kernel&{\cellcolor{Bittersweet!8}}\textbf{76.30}&{\cellcolor{Bittersweet!8}}\textbf{95.95}&{\cellcolor{Bittersweet!8}}74.92&{\cellcolor{Bittersweet!8}}83.53&{\cellcolor{Bittersweet!8}}50.21&{\cellcolor{Bittersweet!8}}\textbf{93.79}&{\cellcolor{Bittersweet!8}}\textbf{56.33}&{\cellcolor{Bittersweet!8}}\textbf{64.43}&{\cellcolor{Bittersweet!8}}\textbf{61.93}&{\cellcolor{Bittersweet!8}}59.65&{\cellcolor{Bittersweet!8}}\textbf{61.31}&{\cellcolor{Bittersweet!8}}\textbf{35.60}&{\cellcolor{Bittersweet!8}}\textbf{67.13}\\
\bottomrule
\bottomrule
\end{tabular}}
\end{center}
\vspace{-1mm}
\end{table*}

\subsection{Kernel-based Embedding Alignment}
CLIP consists of two components: an image encoder $f_\theta$ and a text encoder $f_\varphi$. These encoders transform each language-image pair $(T_i, I_i)$ into their respective representations $f_\theta(I_i), f_\varphi(T_i)\in \mathbb{R}^d$. The target model $g$ is a vision encoder (e.g., DINOv2) that generates the representation $g(I_i) \in \mathbb{R}^{d'}$ for the image $I_i$. While CLIP effectively achieves visual-language alignment, it struggles to capture fine-grained visual details, an area where the target model excels. As noted by~\citet{tong2024eyes}, samples that carry similar semantic meanings but differ in visual details exhibit high similarity in the CLIP feature space, while showing low similarity in the DINOv2 feature space. The value of kernel function is a measure of similarity between two samples. The kernel matrix of the target model reflects how samples are arranged in feature spaces based on their visual similarities, whereas the kernel matrix of CLIP only captures semantic similarities. Consequently, it is a natural idea to align the kernel matrix of CLIP with that of the target model, as it can encourage the arrangement of samples in the feature space to better represent their visual patterns, thus alleviating the limitations of CLIP (as illustrated in Fig.~\ref{teaser}). Due to the large sample size, directly minimizing the distance between the two kernel matrices can be computationally infeasible. To address this, we propose the following optimization objective for alignment fine-tuning:
\begin{equation}
\min_\theta \underset{I_i, I_j\sim \mathcal{D}_{\text{train}}}{\mathbb{E}}{[(k_1(f_\theta(I_i), f_\theta(I_j))-k_2(g(I_i), g(I_j)))^2]},
\end{equation}
 where $k_1$ and $k_2$ are the kernel functions for CLIP and the target model, which may take different forms or with different parameters to reflect the variation in the feature space. According to Hoeffding's inequality~\cite{hoeffding1994probability}, the empirical estimate of the objective function is an unbiased estimator, and the following proposition implies that the gradient w.r.t $\theta$ can also be computed through limited samples.

 \begin{proposition}
 \label{hoff}
Assume that both \( k_1 \) and \( k_2 \) take values in the range \([-1, 1]\). Let \((I_{m_1}, I_{m_2})\) for \(1 \leq m \leq M\) represent \(M\) pairs of images independently sampled from the data distribution. Assume \( k_1(f_\theta(I_{m_1}), f_\theta(I_{m_2}))\) is $L$-Lipschitz w.r.t. $\theta$ for any sampled pairs. Define the sample-wise gradient as 
\begin{align*}
&t(\theta; I_{m_1}, I_{m_2})\\
:=\: &\nabla_\theta \Bigl( k_1(f_\theta(I_{m_1}), f_\theta(I_{m_2})) - k_2(g(I_{m_1}), g(I_{m_2})) \Bigr)^2,    
\end{align*}
 and the true expected gradient is defined as $\mathbb{E}[t(\theta)] := \nabla_\theta \mathbb{E}_{I_i, I_j \sim \mathcal{D}_{\text{train}}} \left[ \bigl( k_1(f_\theta(I_i), f_\theta(I_j)) - k_2(g(I_i), g(I_j)) \bigr)^2 \right]$. Then, for every $0 < \epsilon < 8L$, we have
\begin{align}
&P\biggl( \Bigl\Vert \frac{1}{M} \sum_{m=1}^M t(\theta;I_{m_1}, I_{m_2})-\mathbb{E}[t(\theta)]\Bigr\Vert_2\geq \epsilon \biggr) \notag\\
\leq \; &\exp\Bigl(-\frac{M\epsilon^2} {512L^2}+\frac{1}{4}\Bigr)
\end{align}
 \end{proposition}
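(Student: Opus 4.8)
The plan is to apply a vector-valued concentration inequality to the i.i.d.\ random variables $t(\theta; I_{m_1}, I_{m_2})$, treating $\theta$ as fixed throughout. The crucial preliminary step is to establish a uniform bound on the norm of each sample-wise gradient. Writing $\Delta_m := k_1(f_\theta(I_{m_1}), f_\theta(I_{m_2})) - k_2(g(I_{m_1}), g(I_{m_2}))$, the chain rule gives $t(\theta; I_{m_1}, I_{m_2}) = 2\Delta_m \nabla_\theta k_1(f_\theta(I_{m_1}), f_\theta(I_{m_2}))$. Since both kernels are assumed to take values in $[-1,1]$, we have $|\Delta_m| \leq 2$; and since $k_1(f_\theta(\cdot),f_\theta(\cdot))$ is $L$-Lipschitz in $\theta$, its gradient satisfies $\Vert \nabla_\theta k_1 \Vert_2 \leq L$ almost everywhere. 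Hence $\Vert t(\theta; I_{m_1}, I_{m_2}) \Vert_2 \leq 4L$ surely, so each summand lies in a ball of radius $4L$ around the origin, and in particular each centered summand $t(\theta;I_{m_1},I_{m_2}) - \mathbb{E}[t(\theta)]$ has norm at most $8L$.

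Next I would invoke the fact that the empirical gradient is an unbiased estimator of $\mathbb{E}[t(\theta)]$ — this is exactly the statement that one may interchange $\nabla_\theta$ and the expectation, justified because the integrand is differentiable and its gradient is uniformly bounded (dominated convergence), and it is precisely the "unbiased estimator" remark preceding the proposition. Thus $\frac{1}{M}\sum_{m=1}^M t(\theta;I_{m_1},I_{m_2})$ is an average of $M$ i.i.d.\ mean-zero (after centering) random vectors each bounded by $8L$ in norm. I would then apply a Hoeffding-type bound for Hilbert-space–valued (or simply $\mathbb{R}^p$-valued) bounded random variables: for i.i.d.\ zero-mean $X_m$ with $\Vert X_m\Vert_2 \le B$ almost surely, $P\big(\Vert \frac{1}{M}\sum_m X_m\Vert_2 \ge \epsilon\big) \le \exp\!\big(-\frac{M\epsilon^2}{2B^2} + \text{const}\big)$; with $B = 8L$ this yields the denominator $512 L^2 = 2\cdot(8L)^2$, and the additive $\tfrac14$ absorbs the lower-order term that appears in these vector concentration statements (e.g.\ from a smoothing/Jensen step, $e^{1/4}$). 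The restriction $0<\epsilon<8L$ is the natural regime where the exponent is negative, i.e.\ the bound is non-trivial, since $\epsilon$ cannot exceed the almost-sure bound on the centered summands in a meaningful way.

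For the vector concentration step itself I would use the standard route: bound $\mathbb{E}\exp(\lambda \Vert \frac1M\sum_m X_m \Vert_2)$ via the scalar function $\lambda \mapsto \Vert \cdot \Vert_2$ being $1$-Lipschitz together with a bounded-differences / McDiarmid argument in Hilbert space, or alternatively cite a known Hoeffding inequality for Banach-space-valued variables (e.g.\ the version in Pinelis, or the symmetrization-plus-Gaussian-comparison argument). Concretely: $g(I_1,\dots,I_M) := \Vert \frac1M\sum_m X_m\Vert_2$ changes by at most $\frac{2B}{M}$ when one pair $(I_{m_1},I_{m_2})$ is replaced, and $\mathbb{E}[g] \le \sqrt{\mathbb{E}[g^2]} = \sqrt{\frac1{M^2}\sum_m \mathbb{E}\Vert X_m\Vert_2^2} \le \frac{B}{\sqrt M}$; McDiarmid then gives $P(g \ge \mathbb{E}[g] + s) \le \exp(-\tfrac{Ms^2}{2B^2})$, and choosing $s = \epsilon - B/\sqrt M$ and crudely bounding the cross term produces exactly the claimed form with the $\exp(1/4)$ slack.

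The main obstacle I anticipate is getting the constants to land exactly at $512L^2$ and $\tfrac14$: the bounded-differences constant ($2B/M$ vs.\ $B/M$), whether one uses $B=4L$ or $B=8L$ for the centered variables, and how the $\mathbb{E}[g]$ term is absorbed all feed into the final exponent, so I would keep careful track of the factor of $2$ in McDiarmid's inequality and of the centering step, and verify that $2\cdot(8L)^2 = 128 L^2$ versus the stated $512L^2 = 2\cdot(16L)^2$ — suggesting the authors bound the centered summand by $16L$ (namely $|2\Delta_m|\le 4$ times $\Vert\nabla k_1\Vert\le L$ after a looser $|\Delta_m|\le 2$ giving $8L$, then doubled again in centering, or a factor-$2$ symmetrization) — and reconcile the constant accordingly. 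Everything else is routine once the almost-sure norm bound $\Vert t\Vert_2 \le 4L$ and the unbiasedness are in place.
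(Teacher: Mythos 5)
Your setup matches the paper's: you correctly compute $t = 2\Delta_m \nabla_\theta k_1$, bound $\Vert t\Vert_2 \le 4L$ from $|\Delta_m|\le 2$ and the Lipschitz assumption, center to get $\Vert t - \mathbb{E}[t]\Vert_2 \le 8L$, note unbiasedness, and reduce the claim to a concentration inequality for bounded i.i.d.\ random vectors. The gap is in that final step. The paper does not use a Hoeffding/McDiarmid-type bound of the form $\exp(-M\epsilon^2/(2B^2)+\mathrm{const})$; it invokes the Vector Bernstein inequality (Gross 2011; Kohler--Lucchi 2017): for independent centered vectors with $\Vert X_m\Vert_2 \le \mu$ and $\mathbb{E}\Vert X_m\Vert_2^2 \le \sigma^2$, one has $P\bigl(\Vert \tfrac{1}{M}\sum_m X_m\Vert_2 \ge \epsilon\bigr) \le \exp\bigl(-\tfrac{M\epsilon^2}{8\sigma^2} + \tfrac14\bigr)$ for $0<\epsilon<\sigma^2/\mu$. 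With $\mu = 8L$ and the trivial variance bound $\sigma^2 \le \mu^2 = 64L^2$, the denominator is $8\sigma^2 = 512L^2$ --- not $2\cdot(8L)^2$, which equals $128L^2$ (the identity $512L^2 = 2\cdot(8L)^2$ you assert mid-proof is false), and not because the authors bound the centered summand by $16L$ as you conjecture at the end. The same inequality also explains the restriction $0<\epsilon<8L$: it is the Bernstein hypothesis $\epsilon < \sigma^2/\mu = 64L^2/(8L)$, a condition required for the inequality to apply, not merely the regime where the bound is informative.

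Your fallback McDiarmid route also would not land the stated constants as sketched: with $s = \epsilon - B/\sqrt{M}$ the exponent acquires a cross term $+\sqrt{M}\epsilon/B$ that grows with $M$ and cannot be absorbed into an additive $\tfrac14$; a crude bound such as $(a-b)^2 \ge a^2/2 - b^2$ salvages an inequality of the same shape but with denominator $4B^2 = 256L^2$ and additive constant $\tfrac12$, still not the stated one. So while your reduction to vector concentration is exactly the paper's strategy and your norm bounds are the ones the paper uses, the proposal as written does not establish the proposition with its stated constants; citing the Vector Bernstein inequality in the form above closes the gap immediately.
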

As a result, we can utilize stochastic optimization methods, such as mini-batch gradient descent, to solve the problem efficiently. In each iteration, we sample a batch of data pairs and minimize the difference in kernel functions specifically for those pairs. This approach enables the alignment framework to scale effectively to datasets of real-world size.

\subsection{Regularization for Visual-language Alignment}

To ensure the alignment with the text branch is preserved, we introduce a regularization to prevent the aligned features from straying too far from their original direction. This term is represented by the $L_2$ distance between the visual representations before and after the alignment fine-tuning. The final optimization objective is expressed as follows:
\begin{align}
\label{eq:loss}
&\min_\theta\;  w\cdot{\mathbb{E}}_{I_i, I_j\sim \mathcal{D}_{\text{train}}}\Bigl[\Bigl( k_1\bigl(f_\theta(I_i), f_\theta(I_j)\bigr) \\
&-k_2\bigl(g(I_i), g(I_j)\bigr)\Bigr)^2\Bigr]+{\mathbb{E}}_{I_i\sim \mathcal{D}_{\text{train}}}\Bigl[\bigl\Vert f_\theta(I_i) - f_{\theta_0}(I_i)\bigr\Vert_2^2\Bigr],\notag
\end{align}
where $\theta_0$ is the parameters of the pre-trained CLIP and is frozen during the alignment phase. $w$ is a coefficient balancing the two loss terms. The regularization term ensures that the alignment process does not cause the aligned embeddings to deviate significantly from the original embeddings. The following proposition illustrates how this regularization helps preserve the language-image alignment.
\begin{proposition} 
\label{fare}
\cite{schlarmann2024robust} For every language-image pair $(T, I)$, if $\Vert f_\theta(I) - f_{\theta_0}(I) \Vert_2 \leq \lambda$ holds, then we will have  
\begin{align*}
    &\Bigl\vert\,\cos\bigl(f_{\theta_0}(I), g(T)\bigr)-\cos\bigl(f_{\theta}(I), g(T)\bigr)\,\Bigr\vert \\
    \leq\: & \frac{2\lambda}{\max\bigl\{\Vert f_{\theta}(I)\Vert_2 , \Vert f_{\theta_0}(I)\Vert_2 \bigr\} }
\end{align*}
\end{proposition}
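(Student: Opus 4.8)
The plan is to reduce the claim to a purely geometric inequality between the \emph{normalized} versions of $f_{\theta_0}(I)$ and $f_\theta(I)$, eliminating the text vector at the outset. Write $a:=f_{\theta_0}(I)$ and $b:=f_\theta(I)$, and let $u$ denote the (nonzero) text embedding appearing in the two cosine similarities. First I would expand
\[
\cos(a,u)-\cos(b,u)=\Bigl\langle \tfrac{a}{\Vert a\Vert_2}-\tfrac{b}{\Vert b\Vert_2},\ \tfrac{u}{\Vert u\Vert_2}\Bigr\rangle ,
\]
using bilinearity of the inner product, and then apply Cauchy--Schwarz together with $\Vert u/\Vert u\Vert_2\Vert_2=1$ to obtain $\bigl\vert\cos(a,u)-\cos(b,u)\bigr\vert\le \bigl\Vert a/\Vert a\Vert_2-b/\Vert b\Vert_2\bigr\Vert_2$. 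This step is immediate and already removes all dependence on $u$ (hence on $T$).

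The remaining ingredient is the elementary lemma that, for any nonzero $a,b$,
\[
\Bigl\Vert \tfrac{a}{\Vert a\Vert_2}-\tfrac{b}{\Vert b\Vert_2}\Bigr\Vert_2\ \le\ \frac{2\,\Vert a-b\Vert_2}{\max\{\Vert a\Vert_2,\Vert b\Vert_2\}}.
\]
I would prove it by assuming without loss of generality $\Vert a\Vert_2\ge\Vert b\Vert_2$ and inserting the intermediate term $b/\Vert a\Vert_2$, so that $\tfrac{a}{\Vert a\Vert_2}-\tfrac{b}{\Vert b\Vert_2}=\tfrac{a-b}{\Vert a\Vert_2}+b\cdot\tfrac{\Vert b\Vert_2-\Vert a\Vert_2}{\Vert a\Vert_2\Vert b\Vert_2}$. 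Taking norms, simplifying the second term to $\bigl\vert\Vert a\Vert_2-\Vert b\Vert_2\bigr\vert/\Vert a\Vert_2$, and bounding $\bigl\vert\Vert a\Vert_2-\Vert b\Vert_2\bigr\vert\le\Vert a-b\Vert_2$ by the reverse triangle inequality yields the two copies of $\Vert a-b\Vert_2/\Vert a\Vert_2$, i.e. the stated bound.

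Finally I would chain the two inequalities and substitute the hypothesis $\Vert a-b\Vert_2=\Vert f_\theta(I)-f_{\theta_0}(I)\Vert_2\le\lambda$ to conclude. The only point requiring care — and the main ``obstacle,'' though a minor one — is that the denominator must be the \emph{larger} of the two norms: this dictates that the intermediate term be chosen so as to place $\Vert a\Vert_2$ in the denominator, whereas the opposite choice produces only the weaker bound with $\Vert b\Vert_2$ there. The argument uses nothing about the specific form of $f_\theta$, $g$, or the kernels, so it goes through verbatim regardless of the encoder architectures.
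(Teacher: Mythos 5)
Your proposal is correct and follows essentially the same route as the paper: Cauchy--Schwarz to eliminate the text embedding, then the same intermediate-term decomposition of the difference of normalized vectors combined with the reverse triangle inequality. The only cosmetic difference is that you argue once under the WLOG assumption $\Vert a\Vert_2\ge\Vert b\Vert_2$, whereas the paper writes out both bounds (one for each choice of denominator) and takes the minimum, which is equivalent.
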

The proposition shows this regularization helps preserve the language-image alignment without the need of incorporating any text data during the fine-tuning phase.

\section{Experiments}
\begin{table*}[t]
\centering
\renewcommand\arraystretch{1}
    \caption{\textbf{Summary of zero-shot image-to-text and text-to-image retrieval performance on Flickr30K~\cite{young2014image} and COCO benchmark~\cite{chen2015microsoft} datasets.} Our alignment will not sacrifice the image-text alignment property of the original CLIP.}
    \label{retrieval}
\begin{center}
\footnotesize
\resizebox{1\textwidth}{!}{
\begin{tabular}{l|l|ccc|ccc|ccc|ccc}
\toprule
\multirow{3}{*}[-25pt]{\adjustbox{stack=cc}{}}&\multirow{3}{*}{\adjustbox{stack=cc}{Vision\\Encoder}}&\multicolumn{6}{c|}{Image-to-Text Retrieval}&\multicolumn{6}{c}{Text-to-Image Retrieval}\\
&&\multicolumn{3}{c|}{Flickr30K}&\multicolumn{3}{c|}{MSCOCO}&\multicolumn{3}{c|}{Flickr30K}&\multicolumn{3}{c}{MSCOCO}\\
&&R@1&R@5&R@10&R@1&R@5&R@10&R@1&R@5&R@10&R@1&R@5&R@10\\
\midrule
\midrule
\multirow{2}{*}{ViT-B-16}&w/o align&77.90&\textbf{94.30}&97.40&\textbf{48.36}&72.68&81.76&60.64&83.82&90.36&31.80&55.91&66.97\\
&{\cellcolor{Bittersweet!8}}w/ align&{\cellcolor{Bittersweet!8}}\textbf{78.10}&{\cellcolor{Bittersweet!8}}94.10&{\cellcolor{Bittersweet!8}}\textbf{97.50}&{\cellcolor{Bittersweet!8}}48.06&{\cellcolor{Bittersweet!8}}\textbf{72.86}&{\cellcolor{Bittersweet!8}}\textbf{81.78}&{\cellcolor{Bittersweet!8}}\textbf{60.92}&{\cellcolor{Bittersweet!8}}\textbf{84.30}&{\cellcolor{Bittersweet!8}}\textbf{90.62}&{\cellcolor{Bittersweet!8}}\textbf{32.12}&{\cellcolor{Bittersweet!8}}\textbf{56.51}&{\cellcolor{Bittersweet!8}}\textbf{67.15}\\
\midrule
\multirow{2}{*}{ViT-L-14}&w/o align&81.40&96.20&98.70&50.64&74.20&82.96&63.62&86.36&91.86&34.51&59.21&69.37\\
&DIVA&78.70&94.80&98.30&49.88&74.06&82.82&59.80&83.36&89.40&34.21&58.76&68.97\\
&{\cellcolor{Bittersweet!8}}w/ align&{\cellcolor{Bittersweet!8}}\textbf{83.00}&{\cellcolor{Bittersweet!8}}\textbf{96.90}&{\cellcolor{Bittersweet!8}}\textbf{99.00}&{\cellcolor{Bittersweet!8}}\textbf{51.72}&{\cellcolor{Bittersweet!8}}\textbf{75.52}&{\cellcolor{Bittersweet!8}}\textbf{83.30}&{\cellcolor{Bittersweet!8}}\textbf{64.78}&{\cellcolor{Bittersweet!8}}\textbf{87.20}&{\cellcolor{Bittersweet!8}}\textbf{92.32}&{\cellcolor{Bittersweet!8}}\textbf{35.98}&{\cellcolor{Bittersweet!8}}\textbf{61.08}&{\cellcolor{Bittersweet!8}}\textbf{71.02}\\
\midrule
\multirow{2}{*}{ViT-L-14-336}&w/o align&83.00&96.60&99.00&52.12&76.12&83.82&64.78&87.92&93.06&35.65&60.30&70.66\\
&DIVA&80.20&95.90&98.30&52.34&76.48&84.16&61.20&84.72&90.80&35.77&60.29&70.44\\
&{\cellcolor{Bittersweet!8}}w/ align&{\cellcolor{Bittersweet!8}}\textbf{84.60}&{\cellcolor{Bittersweet!8}}\textbf{96.80}&{\cellcolor{Bittersweet!8}}\textbf{99.10}&{\cellcolor{Bittersweet!8}}\textbf{53.48}&{\cellcolor{Bittersweet!8}}\textbf{77.64}&{\cellcolor{Bittersweet!8}}\textbf{85.30}&{\cellcolor{Bittersweet!8}}\textbf{67.08}&{\cellcolor{Bittersweet!8}}\textbf{88.98}&{\cellcolor{Bittersweet!8}}\textbf{93.62}&{\cellcolor{Bittersweet!8}}\textbf{37.61}&{\cellcolor{Bittersweet!8}}\textbf{62.48}&{\cellcolor{Bittersweet!8}}\textbf{72.46}\\
\bottomrule
\bottomrule
\end{tabular}}
\end{center}
\vspace{-1mm}
\end{table*}

\begin{table}[tp]
\centering
\renewcommand\arraystretch{1}
    \caption{\textbf{Accuracy evaluation on counting, spatial reasoning, and caption recognition tasks} of CLIP models w/wo alignment.}
    \label{tab:spatial}
\begin{center}
\footnotesize
\begin{tabular}{c|l|cccc}
\toprule
&\adjustbox{stack=cc}{Vision\\Encoder}&svhn&gtsrb&\adjustbox{stack=cc}{clevr\\distance}&\adjustbox{stack=cc}{clevr\\counts}\\
\midrule
\midrule
\multirow{6}{*}{\adjustbox{stack=cc}{Zero-\\shot}}&ViT-B-16&\textbf{31.31}&43.34&22.37&21.21\\
&{\cellcolor{Bittersweet!8}}\hspace{2em}+align&{\cellcolor{Bittersweet!8}}27.40&{\cellcolor{Bittersweet!8}}\textbf{44.35}&{\cellcolor{Bittersweet!8}}\textbf{22.40}&{\cellcolor{Bittersweet!8}}\textbf{21.30}\\
&ViT-L-14&57.02&50.55&20.21&19.43\\
&{\cellcolor{Bittersweet!8}}\hspace{2em}+align&{\cellcolor{Bittersweet!8}}\textbf{59.63}&{\cellcolor{Bittersweet!8}}\textbf{52.53}&{\cellcolor{Bittersweet!8}}\textbf{23.39}&{\cellcolor{Bittersweet!8}}\textbf{21.11}\\
&ViT-L-14-336&56.03&52.41&18.93&20.05\\
&{\cellcolor{Bittersweet!8}}\hspace{2em}+align&{\cellcolor{Bittersweet!8}}\textbf{57.65}&{\cellcolor{Bittersweet!8}}\textbf{53.43}&{\cellcolor{Bittersweet!8}}\textbf{20.91}&{\cellcolor{Bittersweet!8}}\textbf{20.90}\\
\midrule
\multirow{6}{*}{\adjustbox{stack=cc}{Linear\\Probe}}&ViT-B-16&45.02&57.13&\textbf{31.19}&23.68\\
&{\cellcolor{Bittersweet!8}}\hspace{2em}+align&{\cellcolor{Bittersweet!8}}\textbf{49.27}&{\cellcolor{Bittersweet!8}}\textbf{58.30}&{\cellcolor{Bittersweet!8}}30.92&{\cellcolor{Bittersweet!8}}\textbf{24.02}\\
&ViT-L-14&65.20&72.94&22.97&41.25\\
&{\cellcolor{Bittersweet!8}}\hspace{2em}+align&{\cellcolor{Bittersweet!8}}\textbf{69.39}&{\cellcolor{Bittersweet!8}}\textbf{74.51}&{\cellcolor{Bittersweet!8}}\textbf{30.82}&{\cellcolor{Bittersweet!8}}\textbf{49.67}\\
&ViT-L-14-336&61.49&70.17&28.43&53.07\\
&{\cellcolor{Bittersweet!8}}\hspace{2em}+align&{\cellcolor{Bittersweet!8}}\textbf{70.02}&{\cellcolor{Bittersweet!8}}\textbf{71.77}&{\cellcolor{Bittersweet!8}}\textbf{34.65}&{\cellcolor{Bittersweet!8}}\textbf{55.32}\\
\bottomrule
\bottomrule
\end{tabular}
\end{center}
\vspace{-1mm}
\end{table}

\subsection{Implementation Details}
\noindent \textbf{Models.} The above framework is actually flexible enough to align any two sets of embeddings. In this study, we primarily examine the alignment of visual representations from CLIP and vision-focused models. We experiment with several versions of CLIP, including ViT-B-16, ViT-L-14, and ViT-L-14-336, all pre-trained by OpenAI. For the vision-centric target model, we employ DINOv2~\cite{oquab2024dinov2} with ViT-L-14 as the backbone, after registration~\cite{darcet2024vision}. This setup allows for aligned embedding pairs to originate from models with different architectures or to correspond to images of varying resolutions, demonstrating the flexibility and generalizability of our proposed framework. 

\noindent \textbf{Training data.} We utilize the training set of ImageNet-1K~\cite{deng2009imagenet} as our training data, which contains 1.28M images. Compared to the original training process of CLIP, the incremental alignment step is quite lightweight and can be performed on image-only datasets.

\noindent \textbf{Training Schemes.} The kernel we use is the normalized polynomial kernel of degree 3, which has been commonly adopted by several well-known studies in the literature~\cite{stein2023exposing,10224337}. We also explore different kernel choices in the ablation studies. For DINOv2, we set the hyper-parameter of kernel to $\gamma=1/dim_{emb}$ and $c=1$, while for CLIP, they are set as trainable.  More detailed settings for each experiment can be found in the Appendix~\ref{sec:hyper}. With two 4090 GPUs, the alignment of ViT-L-14 takes around 30 hours, which is efficient and hardware-friendly compared to the pre-training phase of CLIP.

\subsection{Improvements on Vision-centric Tasks}
We begin with vision-centric tasks, which can be done by CLIP alone, to verify whether alignment with DINOv2 embeddings can enhance the visual representation of CLIP.

\noindent \textbf{Zero-shot Object Recognition.}
We first test the zero-shot accuracy of the aligned CLIP model on standard object recognition benchmarks. We experiments on a diverse benchmarks composed of 12 datasets, including (1) common objects: ImageNet~\cite{deng2009imagenet}, CIFAR-10, CIFAR-100~\cite{krizhevsky2009learning}, Caltech101~\cite{FeiFei2004LearningGV}; (2) fine-grained objects: OxfordPets~\cite{parkhi12a}, DTD~\cite{cimpoi14describing}, FER2013~\cite{goodfellow2013challenges}; (3) domain-specific applications: PCAM~\cite{Veeling2018-qh}, RESISC45~\cite{cheng2017remote}, EuroSAT~\cite{helber2018introducing}; and (4) out-of-distribution benchmarks: ImageNet-O~\cite{hendrycks2021nae}, ImageNet-Sketch~\cite{wang2019learning}. We adopt the standard CLIP-Benchmark~\cite{clip_benchmark} as the pipeline for evaluation. The results are presented in Table~\ref{zeroshot}. For zero-shot object recognition, performance depends on both perceptual ability and compatibility with the text encoder. After aligning with DINOv2 representations, we observe improvements across most datasets. Specifically, the average zero-shot accuracy increases by 0.82\%, 1.28\%, and 1.03\% with alignment for ViT-B, ViT-L, and ViT-L-336, respectively. This indicates that our proposed alignment enhances visual representation while maintaining compatibility with the CLIP text encoder. The improvements are particularly notable for images with low resolution (e.g., CIFAR) or tiny objects (e.g., EuroSAT), highlighting enhanced fine-grained perception capabilities. Through experiments with different CLIP backbones, we find that these improvements are architecture-agnostic, consistently boosting classification accuracy. Furthermore, the enhancements hold for both the ImageNet dataset and zero-shot datasets, demonstrating that alignment does not compromise the generalizability of CLIP. Even when fine-tuned on relatively small training data, the model still exhibits strong zero-shot performance on out-of-distribution datasets.

\begin{table}[tp]
\centering
\renewcommand\arraystretch{1}
    \caption{\textbf{Results on probing benchmark.} We report macro-averaged recall for both local probing and global probing.}
    \label{tab:probing}
\begin{center}
\footnotesize
\begin{tabular}{l|l|cc}
\toprule
\multicolumn{2}{c|}{Vision Encoder}&local&global\\
\midrule
\midrule
\multirow{2}{*}{ViT-B-16}&w/o align&44.63&52.61\\
&{\cellcolor{Bittersweet!8}}w/ align&{\cellcolor{Bittersweet!8}}\textbf{45.2\textit{}5}&{\cellcolor{Bittersweet!8}}\textbf{52.94}\\
\midrule
\multirow{2}{*}{ViT-L-14}&w/o align&46.40&54.51\\
&{\cellcolor{Bittersweet!8}}w/ align&{\cellcolor{Bittersweet!8}}\textbf{47.44}&{\cellcolor{Bittersweet!8}}\textbf{55.33}\\
\midrule
\multirow{2}{*}{ViT-14-336}&w/o align&46.05&55.13\\
&{\cellcolor{Bittersweet!8}}w/ align&{\cellcolor{Bittersweet!8}}\textbf{46.65}&{\cellcolor{Bittersweet!8}}\textbf{56.09}\\
\bottomrule
\bottomrule
\end{tabular}
\end{center}
\vspace{-1mm}
\end{table}

To demonstrate the superiority of the proposed kernel-based alignment, we compare our method with two straightforward baselines: (1) a linear projector trained to map the DINOv2 representation into the CLIP representation space, and (2) replacing kernel-based alignment with feature-based alignment (details available in Appendix~\ref{sec:feature}). However, we observe limited improvements or even significant drops in zero-shot accuracy. The representation spaces of DINOv2 and CLIP can vary significantly, making it challenging to directly align them while preserving compatibility with the text embeddings. In contrast, our kernel-based alignment focuses on aligning the relative relationships among samples, which helps maintain the macro-structure of the feature space. We also compare our approach with DIVA~\cite{wang2024diffusion}, and the results indicate that our method achieves better zero-shot performance. Furthermore, DIVA and our proposed alignment are orthogonal, they can be combined for complementary benefits.

In Appendix~\ref{otherVLM}, we extend our framework to three additional models that incorporate an intermediate embedding layer linking modalities, albeit with implementations differing from CLIP: SigLIP~\cite{zhai2023sigmoid}, DFN~\cite{fang2023data}, and MetaCLIP~\cite{xu2024demystifying}. We also explore replacing DINOv2 with other vision models, such as MLCD~\cite{an2024multi}. The results show improved zero-shot accuracy for all these pairs, demonstrating the generalizability of the proposed framework.

\begin{figure}[t]
\centering
\includegraphics[width=0.48\textwidth]{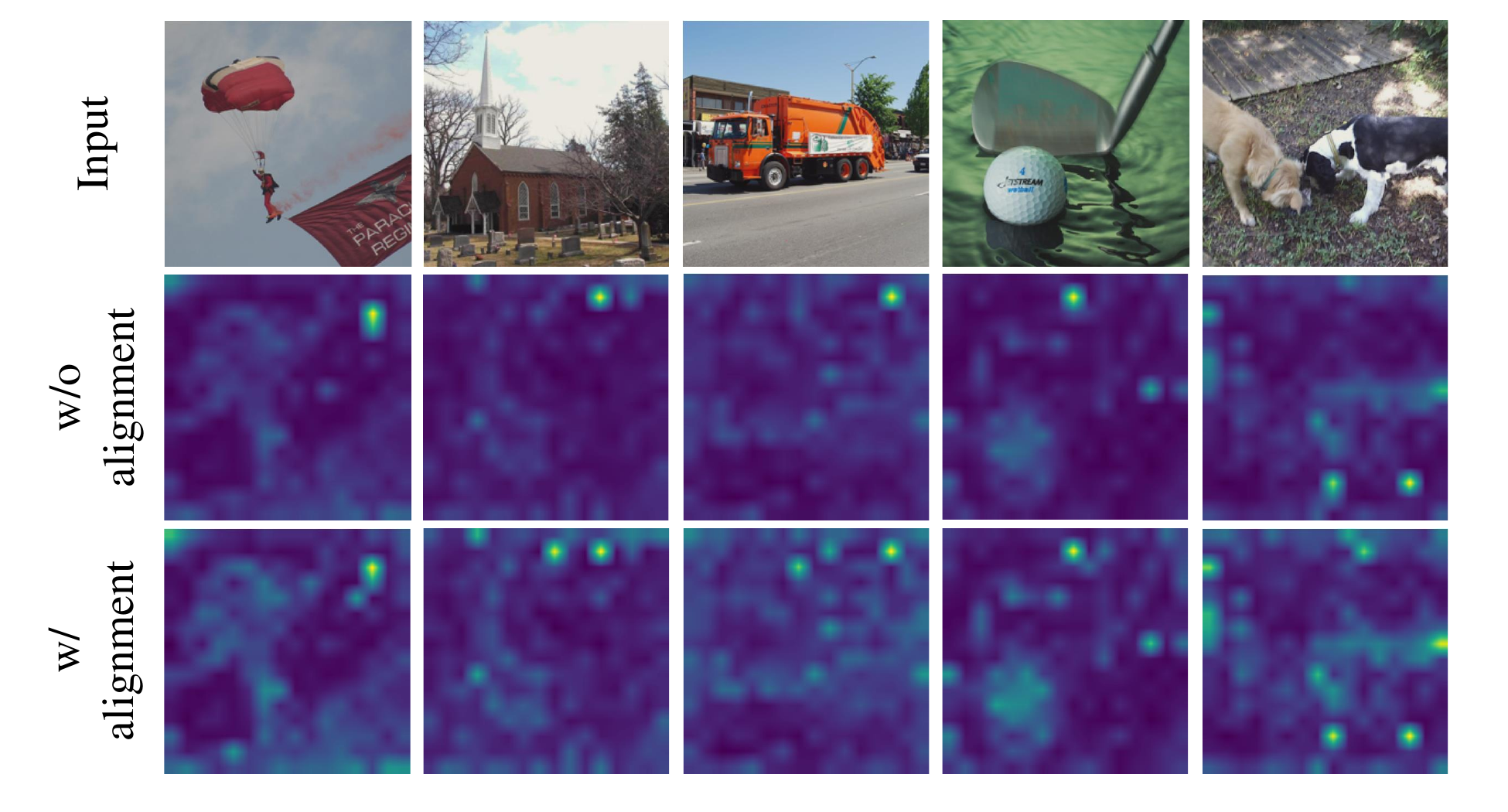}
\caption{\textbf{Visualization of CLIP encoder's attention maps.} Attention maps can show more fine-grained details after alignment.}
\label{attention}
\vspace{-1mm}
\end{figure}

\begin{table*}[tp]
\centering
\renewcommand\arraystretch{1}
    \caption{\textbf{Evaluation of LLaVA on vision-language benchmarks.} The performance can be improved by using the CLIP after alignment as the visual encoder, and it can be further improved through parameter-efficient fine-tuning  (peft) of the LLM.}
    \label{tab:llava}
\begin{center}
\footnotesize
\resizebox{1\textwidth}{!}{
\begin{tabular}{c|l|cccccccccccc|c}
\toprule
&\adjustbox{stack=cc}{Vision\\Encoder}&\rotatebox[origin=c]{75}{AI2D}&\rotatebox[origin=c]{75}{text-ocr}&\rotatebox[origin=c]{75}{text-pure}&\rotatebox[origin=c]{75}{pope}&\rotatebox[origin=c]{75}{tallyQA}&\rotatebox[origin=c]{75}{vsr}&\rotatebox[origin=c]{75}{RefCOCOg}&\rotatebox[origin=c]{75}{RefCOCO+}&\rotatebox[origin=c]{75}{RefCOCO}&\rotatebox[origin=c]{75}{vqa-v2}&\rotatebox[origin=c]{75}{llava-wild}&\rotatebox[origin=c]{75}{llava-coco}&Average\\
\midrule
\midrule
\multirow{4}{*}{\rotatebox[origin=c]{90}{ViT-L}}&baseline&52.91&55.37&41.16&84.64&60.39&51.47&13.89&12.73&15.16&75.07&44.13&75.07&48.50\\
&{\cellcolor{Bittersweet!4}}\hspace{1em}+align&{\cellcolor{Bittersweet!4}}53.66&{\cellcolor{Bittersweet!4}}\textbf{55.70}&{\cellcolor{Bittersweet!4}}41.53&{\cellcolor{Bittersweet!4}}\textbf{84.92}&{\cellcolor{Bittersweet!4}}\textbf{60.39}&{\cellcolor{Bittersweet!4}}51.50&{\cellcolor{Bittersweet!4}}14.15&{\cellcolor{Bittersweet!4}}12.89&{\cellcolor{Bittersweet!4}}15.37&{\cellcolor{Bittersweet!4}}\textbf{75.10}&{\cellcolor{Bittersweet!4}}\textbf{44.67}&{\cellcolor{Bittersweet!4}}\textbf{76.60}&{\cellcolor{Bittersweet!4}}48.87\\
&\hspace{1em}+peft&53.63&54.54&41.82&84.35&57.19&53.93&56.06&52.74&61.27&74.71&44.30&64.84&58.28\\
&{\cellcolor{Bittersweet!8}}\hspace{1em}+both&{\cellcolor{Bittersweet!8}}\textbf{54.79}&{\cellcolor{Bittersweet!8}}55.03&{\cellcolor{Bittersweet!8}}\textbf{42.13}&{\cellcolor{Bittersweet!8}}84.37&{\cellcolor{Bittersweet!8}}59.87&{\cellcolor{Bittersweet!8}}\textbf{58.02}&{\cellcolor{Bittersweet!8}}\textbf{56.47}&{\cellcolor{Bittersweet!8}}\textbf{52.82}&{\cellcolor{Bittersweet!8}}\textbf{62.25}&{\cellcolor{Bittersweet!8}}74.87&{\cellcolor{Bittersweet!8}43.70}&{\cellcolor{Bittersweet!8}65.33}&{\cellcolor{Bittersweet!8}}\textbf{59.14}\\
\midrule
\multirow{4}{*}{\rotatebox[origin=c]{90}{ViT-L-336}}&baseline&\textbf{53.27}&57.82&48.63&85.45&61.45&51.47&55.19&51.97&60.49&\textbf{76.40}&42.97&75.60&60.06\\
&{\cellcolor{Bittersweet!4}}\hspace{1em}+align&{\cellcolor{Bittersweet!4}}52.95&{\cellcolor{Bittersweet!4}}57.94&{\cellcolor{Bittersweet!4}}48.53&{\cellcolor{Bittersweet!4}}85.57&{\cellcolor{Bittersweet!4}}61.25&{\cellcolor{Bittersweet!4}}51.47&{\cellcolor{Bittersweet!4}}55.66&{\cellcolor{Bittersweet!4}}52.78&{\cellcolor{Bittersweet!4}}60.54&{\cellcolor{Bittersweet!4}}76.34&{\cellcolor{Bittersweet!4}}\textbf{48.40}&{\cellcolor{Bittersweet!4}}\textbf{79.07}&{\cellcolor{Bittersweet!4}}60.88\\
&\hspace{1em}+peft&52.69&57.79&48.50&85.41&61.91&54.14&56.76&53.81&61.98&75.88&44.13&69.13&60.18\\
&{\cellcolor{Bittersweet!8}}\hspace{1em}+both&{\cellcolor{Bittersweet!8}}52.78&{\cellcolor{Bittersweet!8}}\textbf{58.18}&{\cellcolor{Bittersweet!8}}\textbf{48.87}&{\cellcolor{Bittersweet!8}}\textbf{85.70}&{\cellcolor{Bittersweet!8}}\textbf{62.21}&{\cellcolor{Bittersweet!8}}\textbf{56.22}&{\cellcolor{Bittersweet!8}}\textbf{57.68}&{\cellcolor{Bittersweet!8}}\textbf{55.74}&{\cellcolor{Bittersweet!8}}\textbf{63.37}&{\cellcolor{Bittersweet!8}}75.93&{\cellcolor{Bittersweet!8}}44.90&{\cellcolor{Bittersweet!8}}78.83&{\cellcolor{Bittersweet!8}}\textbf{61.70}\\
\bottomrule
\bottomrule
\end{tabular}}
\end{center}
\vspace{-1mm}
\end{table*}

\noindent \textbf{Image-to-text and Text-to-image Retrievals.}
To further demonstrate that the proposed alignment framework does not compromise the image-text alignment or the generalizability of CLIP, we present zero-shot image-to-text and text-to-image retrieval performance in Table~\ref{retrieval}. These experiments were conducted on both the Flicker30K~\cite{young2014image} and MSCOCO~\cite{chen2015microsoft} datasets. The results indicate that CLIP maintains strong performance in both retrieval tasks even after alignment. This reinforces our assertion that the image-text alignment is effectively preserved, despite the improvements in visual capability.

\noindent \textbf{Counting, Spatial Reasoning, and Caption Recognition.}
As highlighted in the literature~\cite{tong2024eyes}, the CLIP encoder often struggles with specific tasks, including counting, spatial relationship inference, and caption recognition, which serves as a major motivation for our work. To investigate how alignment with DINOv2 can alleviate these issues, we conduct experiments on four related benchmarks: (1) SVHN~\cite{netzer2011reading}, which composed of natural scene images with digits and numbers; (2) GTSRB~\cite{stallkamp2012man}, a datasets for recognition of German traffic sign; (3) CLEVR Distance~\cite{johnson2017clevr}, which composed of images with multiple objects and the task is to determine the distance between the closest objects; and (4) CLEVR Counts, a benchmark that requires the model to count the number of objects within the images. We again follow the evaluation protocol of CLIP-Benchmark, which formulates all four benchmarks as classification tasks. We assess the improvements through both zero-shot classification and linear probing. The results, presented in Table~\ref{tab:spatial}, indicate that alignment significantly enhances CLIP's performance on text or spatial-related tasks, particularly after linear probing. For example, when using ViT-L-14, alignment improves the average accuracy by 1.58\% for zero-shot classification and 5.51\% for linear probing. Compared to common object recognition, tasks like caption recognition, counting, and spatial reasoning rely more heavily on the quality of the visual representation. This is where vision-centric models such as DINOv2 significantly outperform CLIP. After alignment, the CLIP representation demonstrates enhanced visual capabilities, resulting in better performance on these tasks. Additionally, we include results from MMVP-VLM~\cite{tong2024eyes} in Appendix~\ref{sec:mmvp-vlm}, which provides insights from a more diverse but relatively smaller benchmark.

\noindent \textbf{Localization Ability.} We further evaluate the localization ability of CLIP visual encoders w/wo alignment. Specifically, we utilize the probing benchmark introduced by \citet{covert2024locality}. This evaluation involves freezing the visual encoder and training a classification head to predict the union of labels for each patch (local probing) or the entire image (global probing) using a binary cross-entropy loss. The experiments are conducted on the MSCOCO dataset~\cite{lin2014microsoft}. Following the original setup, we report the macro-averaged recall to account for class imbalances. The results, shown in Table~\ref{tab:probing}, reveal that after alignment with DINOv2, the CLIP visual encoder demonstrates improved localization ability, achieving higher recall for both local and global probing. This enhancement can be attributed to the strong perceptual capabilities of the DINOv2 encoder, which are transferred to CLIP encoder via alignment. Furthermore, the improvements are consistent across different CLIP encoder architectures. While the gains may not be as significant compared to those reported by \citet{covert2024locality}, their approach disrupts the connection with text representations, leading to a loss of CLIP's zero-shot capability. In contrast, our method enhances performance without compromising the textual alignment.

We visualize the attention maps from the penultimate layer of the CLIP visual encoder for several examples in Fig~\ref{attention}. The attention maps are generally similar w/wo alignment, as the regularization term prevents significant changes in the model's weights. However, we still observe that the attention maps after alignment are sharper and highlight more fine-grained features. This indicates that the aligned model is better at capturing details from the input image, resulting in improved localization ability.

\begin{figure*}[t]
\centering
\includegraphics[width=\textwidth]{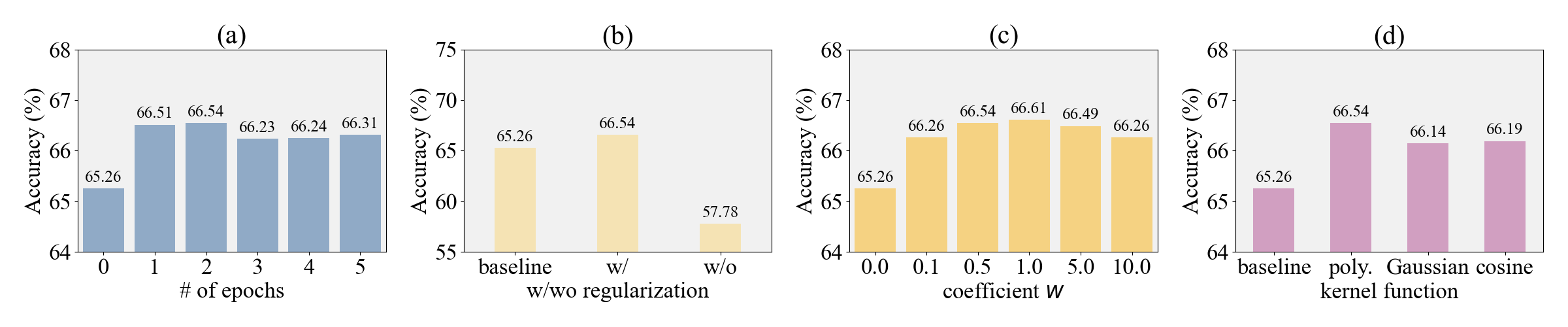}
\caption{\textbf{Ablation studies.} Average zero-shot accuracy across 11 datasets are reported as evaluation metrics: (a) Effects of training epochs; (b) Effects of the regularization term; (c) Effects of the coefficients of alignment; (d) Effects of different kernel functions.}
\label{ablation}
\vspace{-1mm}
\end{figure*}

 \begin{table*}[tp]
\centering
\renewcommand\arraystretch{1}
    \caption{\textbf{Evaluation results on OpenFlamingo across seven vision-language datasets using 0, 4, and 16 in-context examples.} The performance can be improved by using the CLIP visual encoder after alignment as the visual encoder, \underline{without fine-tuning} the LLM part. We report CIDEr for COCO and Flicker-30K, ROC AUC for HatefulMemes, and VQA accuracy for the rest.}
    \label{flamingo}
\begin{center}
\footnotesize
\resizebox{0.88\textwidth}{!}{
\begin{tabular}{c|c|ccccccc|c}
\toprule
Shots&Vision Encoder&COCO&Flickr-30K&VQAv2&OK-VQA&TextVQA&VizWiz&HatefulMemes&Average\\
\midrule
\midrule
\multirow{2}{*}{0}&CLIP&74.40&54.15&43.54&28.72&25.42&17.80&51.03&42.15\\
&{\cellcolor{Bittersweet!8}}\hspace{3em}+align&{\cellcolor{Bittersweet!8}}\textbf{76.01}&{\cellcolor{Bittersweet!8}}\textbf{54.42}&{\cellcolor{Bittersweet!8}}\textbf{43.71}&{\cellcolor{Bittersweet!8}}\textbf{29.04}&{\cellcolor{Bittersweet!8}}\textbf{25.92}&{\cellcolor{Bittersweet!8}}\textbf{17.91}&{\cellcolor{Bittersweet!8}}\textbf{52.86}&{\cellcolor{Bittersweet!8}}\textbf{42.84}\\
\midrule
\multirow{2}{*}{4}&CLIP&82.56&58.72&45.40&31.81&29.03&23.05&48.24&45.54\\
&{\cellcolor{Bittersweet!8}}\hspace{3em}+align&{\cellcolor{Bittersweet!8}}\textbf{83.95}&{\cellcolor{Bittersweet!8}}\textbf{60.03}&{\cellcolor{Bittersweet!8}}\textbf{45.57}&{\cellcolor{Bittersweet!8}}\textbf{31.82}&{\cellcolor{Bittersweet!8}}\textbf{29.43}&{\cellcolor{Bittersweet!8}}\textbf{23.12}&{\cellcolor{Bittersweet!8}}\textbf{49.22}&{\cellcolor{Bittersweet!8}}\textbf{46.16}\\
\midrule
\multirow{2}{*}{16}&CLIP&90.42&62.68&45.59&32.12&29.83&35.22&\textbf{50.62}&49.45\\
&{\cellcolor{Bittersweet!8}}\hspace{3em}+align&{\cellcolor{Bittersweet!8}}\textbf{90.89}&{\cellcolor{Bittersweet!8}}\textbf{63.85}&{\cellcolor{Bittersweet!8}}\textbf{45.73}&{\cellcolor{Bittersweet!8}}\textbf{32.43}&{\cellcolor{Bittersweet!8}}\textbf{30.33}&{\cellcolor{Bittersweet!8}}\textbf{35.74}&{\cellcolor{Bittersweet!8}}48.85&{\cellcolor{Bittersweet!8}}\textbf{49.69}\\
\bottomrule
\bottomrule
\end{tabular}}
\vspace{-1mm}
\end{center}
\end{table*}

\subsection{Improvements on MLLMs}
We then show that the enhancements in visual representation can be transferred to downstream tasks. Specifically, we replace the vision encoder of several MLLMs with the aligned CLIP encoder and evaluate improvements on relevant benchmarks. The LLM component can either be entirely frozen or fine-tuned. Importantly, our alignment maintains the similarity of visual representations before and after the process, eliminating the need for an additional vision-language alignment step. As a result, fine-tuning can be conducted efficiently using techniques such as LoRA~\cite{hu2022lora}.

\noindent \textbf{Evaluation on LLaVA.} We first conduct experiments on LLaVA-1.5-7B~\cite{liu2024visual}. We utilize diverse benchmarks which comprises of multiple tasks including (1) open-ended visual question answering: VQAv2~\cite{goyal2017making} and TextVQA~\cite{singh2019towards}; (2) localization: RefCOCO, RefCOCO+, RefCOCOg~\cite{kazemzadeh2014referitgame,yu2016modeling}; and (3) closed-set prediction: VSR~\cite{liu2023visual}, TallyQA~\cite{acharya2019tallyqa}, POPE~\cite{li2023evaluating}, and AI2D~\cite{kembhavi2016diagram}. We also perform GPT-aided evaluation, LLaVA-bench~\cite{liu2024visual}. We replace the visual encoder with our aligned version and experiment with both ViT-L-14 and ViT-L-14-336 versions of CLIP encoder. To better illustrate the enhancements, we further fine-tune the LLM component of the models. We skip the feature alignment step and conduct only visual instruction tuning using LoRA. For the training data, we utilize the LLaVA-1.5 data mixture~\cite{liu2024visual}, which contains 665k examples and is the tuning dataset for the original LLaVA. We also report metrics from performing the same fine-tuning on the original model to control for the effects of fine-tuning. The results, shown in Table~\ref{tab:llava}, indicate improvements even when the vision encoder is simply replaced with our aligned version, without fine-tuning the LLM component. This suggests that the alignment lead to improvement while preserving the compatibility of CLIP with the LLM component. The improvements are further amplified when we fine-tune the LLM component, resulting in an average score increase of 1.66\%, for model with ViT-L-14-336 as the visual encoder. This is particularly evident in the localization benchmarks, with 3.05\% improvement on average. Furthermore, the enhancements cannot be solely attributed to LLM fine-tuning, as fine-tuning LLM alone does not yield such great gains. These experiments demonstrate that our proposed alignment enhances visual capabilities, especially fine-grained perception, and these improvements can be effectively transferred to downstream applications.

\noindent \textbf{Evaluation on OpenFlamingo.} We extend the evaluation to another popular MLLM, OpenFlamingo~\cite{awadalla2023openflamingo}. We follow the evaluation pipeline of the original paper, which test the in-context-learning ability of the MLLMs in several VQA benchmarks, including COCO~\cite{chen2015microsoft}, Flicker-30K~\cite{young2014image}, VQAv2~\cite{goyal2017making}, OK-VQA~\cite{marino2019ok}, TextVQA~\cite{singh2019towards}, VizWiz~\cite{gurari2018vizwiz}, and HatefulMemes~\cite{kiela2020hateful}. For each dataset, we sample a few in-context demonstrations from the training split uniformly at random, and prompt the model to give answers to the test samples. We use the OpenFlamingo-3B (Instruct) version of the model. The results are shown in Table~\ref{flamingo}. For all experiments, we replace the visual encoder to our aligned version, without fine-tuning the LLM part. We discover the alignment improves most of the metrics, showing the generalization of our proposed framework.

\subsection{Ablation Studies}
We study the impact of various design choices through ablation studies, focusing on factors such as training epochs, loss function design, and the kernel functions utilized. Zero-shot object recognition accuracy serves as our evaluation metric, with results illustrated in Fig.\ref{ablation}. Key takeaways include: (1) the method demonstrates optimal performance with a very short training duration; (2) the regularization term is crucial for preserving visual-language alignment; (3) the performance remains stable regardless of the coefficient $w$; and (4) different kernels all contribute positively, with the polynomial kernel yielding the best results. A more comprehensive analysis can be found in Appendix~\ref{sec:ablation}.

\section{Limitations \& Conclusion}
In this study, we introduce a novel kernel-based embedding alignment strategy designed to align two sets of embeddings. Our evaluations on vision-centric tasks and MLLM benchmarks demonstrate that this alignment enhances the visual capabilities of CLIP, with benefits that inherit to downstream applications. Unlike most existing research which enhances the visual encoder but requires re-tuning of the LLM, our approach opens up a new research avenue to facilitate improvements through lightweight adaptation, making these advancements more accessible to a broader community.

Due to the lack of computational resources, we align the embedding on a relatively small-scale dataset, and only evaluate the performance on small MLLMs. It can be left as a future work to conduct the fine-tuning on a larger scale datasets~\cite{gadre2024datacomp} and verify the effects on MLLMs with larger size (e.g., 70B). Another limitation is that this work focuses on the alignment between CLIP and DINOv2. There are MLLMs that utilize vision encoders other than CLIP~\cite{chen2024internvl}, and other of vision-centric models~\cite{bao2022beit} available. We would extend the current pipeline to help align other embeddings pairs and bring improvements to more MLLMs.

\section*{Acknowledgements}
The work of Farzan Farnia is partially supported by a grant from the Research Grants Council of the Hong Kong Special Administrative Region, China, Project 14209920, and is partially supported by CUHK Direct Research Grants with CUHK Project No. 4055164 and 4937054. Also, this work is supported in part by a grant from the Research Grants Council of the Hong Kong Special Administrative Region, China (Project No. T45-401/22-N). Finally, the authors would like to thank the anonymous reviewers for their insightful suggestions and feedback.

\section*{Impact Statement}

Our work enhances the capabilities of multi-modal large language models by improving visual perception through the alignment of visual representations. As shown in the experiments, we improve the performance on both general-purpose computer vision benchmarks and domain-specific applications including remote sensing and medical diagnosis. Therefore, Our method has the potential to significantly advance applications in areas such as autonomous systems and assistive technologies.  This progress can lead to more accurate and context-aware AI systems, ultimately benefiting industries like healthcare, transportation, and education.

\nocite{langley00}

\bibliography{example_paper}
\bibliographystyle{icml2025}

\newpage
\appendix
\onecolumn
\section{Proofs of Propositions}
\label{sec:proof}
\subsection{Proof of Proposition~\ref{hoff}.}
This is a conclusion derived from Vector Bernstein Inequality~\cite{gross2011recovering,kohler2017sub}, which shows that if $X_1, \cdots, X_M$ are independent vector-valued random variables with common dimension $d$ and that each one is centered, uniformly bounded and also the variance is bounded above:
\begin{equation}
\mathbb{E}[x_m]=0 \text{ and } \Vert X_m\Vert_2 \leq \mu \text{ as well as } \mathbb{E}[\Vert X_m\Vert^2] \leq \sigma^2.
\end{equation}
Let 
\begin{equation}
Z = \frac{1}{M}\sum_{m=1}^M X_m,
\end{equation}
then we have for $0<\epsilon<\sigma^2/\mu$,
\begin{equation}
P(\Vert Z \Vert_2\geq \epsilon) \leq \exp\left(-\frac{M \epsilon^2}{8\sigma^2}+\frac{1}{4}\right).
\end{equation}
By plugging in $X_m$ with $t(\theta;I_{m_1}, I_{m_2})-\mathbb{E}[t(\theta)]$, we have 
\begin{align}
\underset{I_{m_1}, I_{m_2}\sim \mathcal{D}_\text{data}}{\mathbb{E}}[t(\theta;I_{m_1}, I_{m_2})]-\mathbb{E}[t(\theta)] &= 0,\\
\Vert t(\theta;I_{m_1}, I_{m_2})-\mathbb{E}[t(\theta)] \Vert_2 &\leq 8L,\\
\mathbb{E}[\Vert t(\theta;I_{m_1}, I_{m_2})-\mathbb{E}[t(\theta)] \Vert_2^2] &\leq 64L^2.
\end{align}
Therefore, we have:
\begin{equation}
P\bigg( \bigg\Vert \frac{1}{M} \sum_{m=1}^M t(\theta;I_{m_1}, I_{m_2})-\mathbb{E}[t(\theta)]\bigg\Vert_2\geq \epsilon \bigg) \leq \exp(-\frac{M\epsilon^2} {512L^2}+\frac{1}{4}).
\end{equation}

\subsection{Proof of Proposition~\ref{fare}.}
The proposition and the proof are adapted from~\cite{schlarmann2024robust}.
We have
\begin{align*}
&\vert \cos(f_{\theta_0}(I), g(T))-\cos(f_\theta(I), g(T))\vert = \left\vert\langle \frac{g(T)}{\Vert g(T) \Vert_2},  \frac{f_{\theta_0}(I)}{\Vert f_{\theta_0}(I) \Vert_2} - \frac{f_{\theta}(I)}{\Vert f_{\theta}(I) \Vert_2}\rangle\right\vert \leq \left\Vert \frac{f_{\theta_0}(I)}{\Vert f_{\theta_0}(I) \Vert_2} - \frac{f_{\theta}(I)}{\left\Vert f_{\theta}(I) \right\Vert_2} \right\Vert_2.
\end{align*}
For which we can get the two upper bounds:
\begin{align}
&\left\Vert \frac{f_{\theta_0}(I)}{\Vert f_{\theta_0}(I) \Vert_2} - \frac{f_{\theta}(I)}{\left\Vert f_{\theta}(I) \right\Vert_2} \right\Vert_2 \leq \frac{1}{\Vert f_\theta(I)\Vert_2}\left[\left\vert \Vert f_\theta(I)\Vert_2 - \Vert f_{\theta_0}(I)\Vert_2\right\vert + \Vert f_{\theta_0}(I) - f_\theta(I)\Vert_2\right],\\
&\left\Vert \frac{f_{\theta_0}(I)}{\Vert f_{\theta_0}(I) \Vert_2} - \frac{f_{\theta}(I)}{\left\Vert f_{\theta}(I) \right\Vert_2} \right\Vert_2 \leq \frac{1}{\Vert f_{\theta_0}(I)\Vert_2}\left[\left\vert \Vert f_\theta(I)\Vert_2 - \Vert f_{\theta_0}(I)\Vert_2\right\vert + \Vert f_{\theta_0}(I) - f_\theta(I)\Vert_2\right].
\end{align}
According the triangle inequality:
\begin{equation}
\vert \Vert f_{\theta_0}(I)\Vert_2 - \Vert f_{\theta}(I) \Vert_2\vert \leq \Vert f_{\theta_0}(I)-f_\theta(I) \Vert_2,
\end{equation}

therefore, the upper bound holds:
\begin{equation}
\vert \cos(f_{\theta_0}(I), g(T))-\cos(f_\theta(I), g(T))\vert \leq \min\left(\frac{2}{\Vert f_{\theta}(I)\Vert_2}, \frac{2}{\Vert f_{\theta_0}(I)\Vert_2}\right)\Vert f_{\theta_0}(I) - f_{\theta}(I)\Vert_2.
\end{equation}

\noindent \textbf{Remarks.} This proposition demonstrates that minimizing the $L_2$ distance between the visual representations before and after the alignment fine-tuning help directly preserve the visual-text alignment. This is different from  previous work such as \citet{xuhong2018explicit,li2020transfer}, which penalizes the $L_2$-norm of parameter changes during the fine-tuning phase. While their method effectively prevents overfitting to the target domain in transfer learning, our direct feature-based penalty measure aims to maintain zero-shot compatibility while enhancing fine-grained visual capabilities.

\section{More Implementation Details}
\subsection{Hyper-parameters Setup.} 
\label{sec:hyper}
Here we provide more information on the implementation details. Specifically, the hyper-parameters used for the alignment fine-tuning is listed in Table~\ref{tab:setup}. All the experiments are conducted on NVIDIA GeForce RTX 4090 GPUs.
\begin{table}[H]
    \caption{\textbf{Detailed hyper-parameter setups.}}
    \label{tab:setup}
    \centering
    \begin{tabular}{c|ccc}
    \toprule
       Hyper-parameters&ViT-B-16&ViT-L-14&ViT-L-14-336\\
        \midrule
        \midrule
        coefficient $w$&0.5&0.5&1.0\\
\rowcolor{Bittersweet!8} number of GPUs&2&2&4\\
        batch size &128&64&32\\
        \rowcolor{Bittersweet!8}training epochs&2&2&4\\
        optimizer&\multicolumn{3}{c}{AdamW~\cite{loshchilov2017decoupled}}\\
        \rowcolor{Bittersweet!8}weight decay&\multicolumn{3}{c}{1e-4}\\
        $\beta$&\multicolumn{3}{c}{(0.9, 0.999)}\\
        \rowcolor{Bittersweet!8}learning rate&\multicolumn{3}{c}{1e-5}\\
        scheduler&\multicolumn{3}{c}{CosineAnnealingLR} \\
        \rowcolor{Bittersweet!8}warm-up steps&1400&2800&5600\\
        \bottomrule
        \bottomrule
    \end{tabular}
\end{table}
\subsection{Illustration on Feature-based Alignment.} 
\label{sec:feature}
 A vanilla way to align the CLIP embedding $f_\theta(I_i)$ with the embedding of the target model $g(I_i)$ is to minimize the $L_2$ distance between $f_\theta(I_i)$ and $g(I_i)$ subject to a linear transformation $\mathbf{R} \in \mathbb{R}^{d\times d'}$:
\begin{equation}
\label{feature-based}
\min_{\theta, \mathbf{R}} \underset{I_i \sim \mathcal{D}_{\text{train}}}{\mathbb{E}}[w\Vert f_\theta(I_i) - \mathbf{R}g(I_i)\Vert_2^2 + \Vert f_\theta(I_i) - f_{\theta_0}(I_i)\Vert_2^2].
\end{equation}
We call this method feature-based embedding alignment. This approach also make sense and lead to certain improvements compared with the original visual embeddings of CLIP. However, the representation space of CLIP and the target model often vary significantly, which can hardly be adjusted via linear transformation. As a result, it is an sub-optimal solution. On the contrary, our kernel-based embedding alignment provide a more flexible solution. Firstly, comparing with absolute position within the feature space, the relative position among samples is more important. We would like the embeddings of two samples with similar visual characteristics to be close to each other. Using kernel function as supervision can directive promote this effects. Secondly, kernel function measures the similarity in the high-dimensional kernel space. Alignment of embeddings within the kernel space will not lead to drastic changes of the embeddings in the original feature space, making the image embeddings after alignment still compatible with text encoder and downstream modules. Moreover, our kernel-based alignment can actually be interpreted as aligning the representations in the kernel space. The feature transformation $\phi$ for common kernels are usually non-linear. It renders more flexibility compared to the feature-based alignment which only use linear transformation.

\subsection{Implementation Details for LLM's Fine-tuning.}
In this section, we elaborate on the implementation details of the LLM fine-tuning of LLaVA, which we used to further demonstrate the enhancement of the vision encoder with alignment. We employ the official implementation from \href{https://github.com/haotian-liu/LLaVA}{LLaVA} for LoRA fine-tuning. The training is conducted on a mixture of LLaVA-1.5 data for one epoch, using the following LoRA configuration: $r=128$ and $\alpha=256$. The training is executed in bf16 format across four NVIDIA GeForce RTX 4090 GPUs, with a batch size of 1 per device. To address the small batch size, we apply a gradient accumulation step of 32. The optimizer used is AdamW~\cite{loshchilov2017decoupled}, set with a learning rate of 2e-4 and a weight decay of 0.

\section{Additional Experimental Results}
\subsection{Loss Curve for the Alignment Term.} We visualize the alignment loss (with moving average smoothing) during the fine-tuning phase in Fig.~\ref{loss}. We discover the loss term is stably decreasing during the fine-tuning phase, showing that the alignment term is indeed effectively optimized to improve the similarity between CLIP and DINOv2 representations. 

\begin{figure}[H]
\centering
\includegraphics[width=0.5\textwidth]{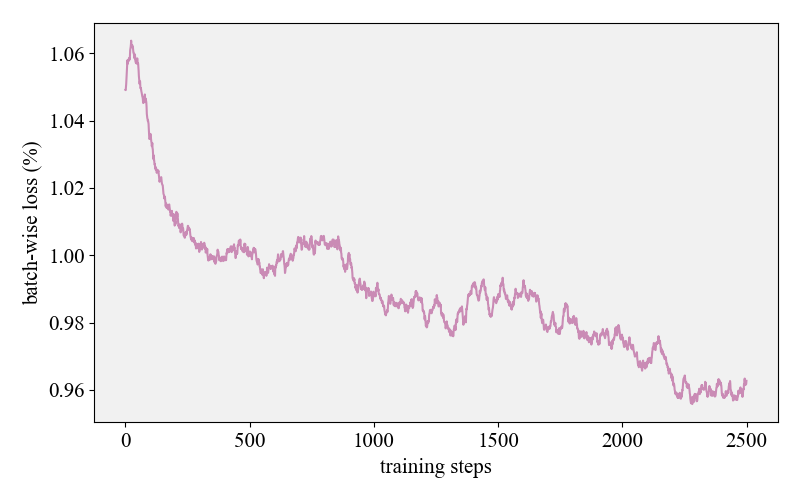}
\caption{\textbf{Visualization of the alignment loss curve.} The alignment loss is stably decreasing during the alignment fine-tuning phase.}
\label{loss}
\end{figure}

\subsection{Results for VLMs other than CLIP.} 
\label{otherVLM}
To evaluate the generalization of our proposed framework, we conducted experiments on three additional models that feature an intermediate embedding layer connecting modalities, but with implementations distinct from CLIP: SigLIP~\cite{zhai2023sigmoid}, DFN~\cite{fang2023data}, and MetaCLIP~\cite{xu2024demystifying}. For all the VLMs, we utilized the ViT-L-14 version, maintaining the same implementation details as those for CLIP ViT-L-14. We assessed zero-shot accuracy across 12 datasets as our evaluation metric, with results presented in Table~\ref{siglip}. The findings indicate that our alignment fine-tuning enhances zero-shot performance for all three models, yielding average increases of 0.45\%, 0.48\%, and 0.87\% respectively. These results highlight the generalizability of our framework, demonstrating that the alignment is applicable to VLMs beyond CLIP. Despite ongoing advancements in CLIP research, our proposed alignment remains adaptable and beneficial for these more advanced VLMs.

\begin{table}[H]
\centering
\renewcommand\arraystretch{1}
    \caption{\textbf{Accuracy evaluation on zero-shot image classification benchmarks of vision-language models w/wo alignment}. We experiments with ViT-L-14 version of SigLIP~\cite{zhai2023sigmoid}, DFN~\cite{fang2023data}, and MetaCLIP~\cite{xu2024demystifying}. }
    \label{siglip}
\begin{center}
\footnotesize
\resizebox{1\textwidth}{!}{
\begin{tabular}{l|c|ccccccccccc|c}
\toprule
\multirow{2}{*}[-25pt]{\adjustbox{stack=cc}{Vision\\Encoder}}&\multirow{2}{*}[-13pt]{\rotatebox[origin=c]{75}{ImageNet}}&\multicolumn{11}{c|}{Zero-shot Datasets}&\multirow{2}{*}[-25pt]{\adjustbox{stack=cc}{Average\\Zero-shot}} \\
\cline{3-13}
&&\rotatebox[origin=c]{75}{CIFAR10}&\rotatebox[origin=c]{75}{CIFAR100}&\rotatebox[origin=c]{75}{CalTech}&\rotatebox[origin=c]{75}{FER}&\rotatebox[origin=c]{75}{OxfordPets}&\rotatebox[origin=c]{75}{DTD}&\rotatebox[origin=c]{75}{RESISC}&\rotatebox[origin=c]{75}{EuroSAT}&\rotatebox[origin=c]{75}{PCAM}&\rotatebox[origin=c]{75}{ImageNet-S}&\rotatebox[origin=c]{75}{ImageNet-O}\\
\midrule
\midrule
SigLip&82.02& 96.87&84.27&86.01&49.90&95.31&71.06&72.59&62.65&50.55&74.02&29.60&70.26\\
{\cellcolor{Bittersweet!8}}\hspace{1em}+align&{\cellcolor{Bittersweet!8}}\textbf{82.18}&{\cellcolor{Bittersweet!8}}96.88&{\cellcolor{Bittersweet!8}}84.89&{\cellcolor{Bittersweet!8}}86.11&{\cellcolor{Bittersweet!8}}49.16&{\cellcolor{Bittersweet!8}}95.42&{\cellcolor{Bittersweet!8}}71.06&{\cellcolor{Bittersweet!8}}72.84&{\cellcolor{Bittersweet!8}}65.54&{\cellcolor{Bittersweet!8}}50.11&{\cellcolor{Bittersweet!8}}73.92&{\cellcolor{Bittersweet!8}}31.90&{\cellcolor{Bittersweet!8}}\textbf{70.71}\\
\midrule
DFN&\textbf{81.50}&98.32&88.15&85.57&42.70&95.58&66.06&73.25&64.94&63.18&68.33&39.30&71.40 \\
{\cellcolor{Bittersweet!8}}\hspace{1em}+align&{\cellcolor{Bittersweet!8}}81.28&{\cellcolor{Bittersweet!8}}98.12&{\cellcolor{Bittersweet!8}}87.82&{\cellcolor{Bittersweet!8}}85.49&{\cellcolor{Bittersweet!8}}44.27&{\cellcolor{Bittersweet!8}}95.39&{\cellcolor{Bittersweet!8}}66.70&{\cellcolor{Bittersweet!8}}73.76&{\cellcolor{Bittersweet!8}}65.44&{\cellcolor{Bittersweet!8}}62.90&{\cellcolor{Bittersweet!8}}68.40&{\cellcolor{Bittersweet!8}}42.35&{\cellcolor{Bittersweet!8}}\textbf{71.88}\\
\midrule
MetaCLIP&75.58&95.67&77.72&85.74&37.96&93.81&62.34&68.52&60.41&70.35&65.05&28.90&67.86\\
{\cellcolor{Bittersweet!8}}\hspace{1em}+align&{\cellcolor{Bittersweet!8}}\textbf{76.60}&{\cellcolor{Bittersweet!8}}96.42&{\cellcolor{Bittersweet!8}}80.11&{\cellcolor{Bittersweet!8}}85.49&{\cellcolor{Bittersweet!8}}40.39&{\cellcolor{Bittersweet!8}}93.68&{\cellcolor{Bittersweet!8}}62.82&{\cellcolor{Bittersweet!8}}69.25&{\cellcolor{Bittersweet!8}}60.35&{\cellcolor{Bittersweet!8}}69.76&{\cellcolor{Bittersweet!8}}65.12&{\cellcolor{Bittersweet!8}}32.65&{\cellcolor{Bittersweet!8}}\textbf{68.73}\\
\bottomrule
\bottomrule
\end{tabular}}
\end{center}
\end{table}

\subsection{Results for Other Target Models.} We also investigate the potential of replacing DINOv2 with alternative vision encoders to demonstrate the flexibility of our proposed framework. Specifically, we employed MLCD~\cite{an2024multi} as the target model, with results displayed in Table~\ref{tab:mlcd}. The findings suggest that switching the target model from DINOv2 to other vision encoders can lead to significant improvements, highlighting the generalizability of our framework. As vision-centric self-supervised learning progresses and larger image datasets are developed, we expect to see the emergence of even more powerful vision encoders in the future. Our research establishes a foundation for utilizing these advanced encoders to enhance the performance of vision-language models and potentially benefit downstream MLLMs.

\begin{table}[H]
\centering
\renewcommand\arraystretch{1}
    \caption{\textbf{Accuracy evaluation on zero-shot image classification benchmarks of CLIP models w/wo alignment}. We use MLCD instead of DINOv2 as the target model for alignment. The results show the alignment still show improvement when using different target model.}
    \label{tab:mlcd}
\begin{center}
\footnotesize
\resizebox{1\textwidth}{!}{
\begin{tabular}{l|c|ccccccccccc|c}
\toprule
\multirow{2}{*}[-25pt]{\adjustbox{stack=cc}{Vision\\Encoder}}&\multirow{2}{*}[-13pt]{\rotatebox[origin=c]{75}{ImageNet}}&\multicolumn{11}{c|}{Zero-shot Datasets}&\multirow{2}{*}[-25pt]{\adjustbox{stack=cc}{Average\\Zero-shot}} \\
\cline{3-13}
&&\rotatebox[origin=c]{75}{CIFAR10}&\rotatebox[origin=c]{75}{CIFAR100}&\rotatebox[origin=c]{75}{CalTech}&\rotatebox[origin=c]{75}{FER}&\rotatebox[origin=c]{75}{OxfordPets}&\rotatebox[origin=c]{75}{DTD}&\rotatebox[origin=c]{75}{RESISC}&\rotatebox[origin=c]{75}{EuroSAT}&\rotatebox[origin=c]{75}{PCAM}&\rotatebox[origin=c]{75}{ImageNet-S}&\rotatebox[origin=c]{75}{ImageNet-O}\\
\midrule
\midrule
ViT-L-14&74.90&95.20 
&71.08&83.30&50.00&93.21&55.21 &63.35&62.65&52.00&59.59 &32.25&65.26\\
{\cellcolor{Bittersweet!8}}\hspace{1em}+align&{\cellcolor{Bittersweet!8}}\textbf{75.32}&{\cellcolor{Bittersweet!8}}96.03&{\cellcolor{Bittersweet!8}}74.90&{\cellcolor{Bittersweet!8}}83.86&{\cellcolor{Bittersweet!8}}49.67&{\cellcolor{Bittersweet!8}}93.43&{\cellcolor{Bittersweet!8}}55.59&{\cellcolor{Bittersweet!8}}64.62&{\cellcolor{Bittersweet!8}}63.44&{\cellcolor{Bittersweet!8}}52.28&{\cellcolor{Bittersweet!8}}59.99&{\cellcolor{Bittersweet!8}}34.70&{\cellcolor{Bittersweet!8}}\textbf{66.26}\\
\midrule
ViT-L-14-336&75.82&94.49&71.11&83.42&49.04&93.70&55.64&63.73&61.46&60.68&61.03&32.75&66.10\\
{\cellcolor{Bittersweet!8}}\hspace{1em}+align&{\cellcolor{Bittersweet!8}}\textbf{76.18}&{\cellcolor{Bittersweet!8}}95.90&{\cellcolor{Bittersweet!8}}76.70&{\cellcolor{Bittersweet!8}}83.53&{\cellcolor{Bittersweet!8}}49.00&{\cellcolor{Bittersweet!8}}94.06&{\cellcolor{Bittersweet!8}}56.60&{\cellcolor{Bittersweet!8}}63.90&{\cellcolor{Bittersweet!8}}61.54&{\cellcolor{Bittersweet!8}}60.66&{\cellcolor{Bittersweet!8}}61.57&{\cellcolor{Bittersweet!8}}36.30&{\cellcolor{Bittersweet!8}}\textbf{67.25}\\
\bottomrule
\bottomrule
\end{tabular}}
\end{center}
\end{table}

\subsection{Results on MMVP-VLM benchmarks.} We conduct experiments on the MMVP-VLM benchmarks~\cite{tong2024eyes}. 
\label{sec:mmvp-vlm}
The benchmark consists of ``CLIP-blind pair'' images, which the CLIP vision encoder recognizes as similar, despite notable visual differences. It is designed to test whether the CLIP model can differentiate between these pairs. The pairs encompass a range of visual patterns, including orientation and direction, the presence of specific features, state and condition, quantity and count, positional and relational context, color and appearance, structural and physical characteristics, text, and viewpoint and perspective. The results are detailed in Table~\ref{tab:mmvp_vlm}. Findings indicate that the proposed alignment improves accuracy by 2.97\%, 2.96\%, and 2.22\% for three variants of CLIP models. These results demonstrate that aligning with DINOv2 helps address the limitations of CLIP in these challenging cases.

\begin{table}[H]
    \caption{\textbf{Performance of CLIP on various visual patterns of MMVP-VLM benchmark.}
    Alignment with DINOv2 embeddings greatly overcomes CLIP's original shortcomings in terms of perceiving visual details.
    Symbols for visual patterns as \citep{tong2024eyes} are inherited:
    \textbf{\faCompass}: Orientation and Direction, \textbf{\faSearch}: Presence of Specific Features, \textbf{\faSync}: State and Condition, \textbf{\faSortNumericUp}: Quantity and Count, \textbf{\faMapPin}: Positional and Relational Context, \textbf{\faPalette}: Color and Appearance, \textbf{\faCogs}: Structural and Physical Characteristics, \textbf{\faFont}: Texts, \textbf{\faCamera}: Viewpoint and Perspective.}
    \vspace{2mm}
    \label{tab:mmvp_vlm}
    \centering
    \small
    \begin{tabular}{l|l|ccccccccc|c}
    \toprule
       \multicolumn{2}{l|}{Vision Encoder} & \faCompass  &  \faSearch & \faSync & \faSortNumericUp & \faMapPin & \faPalette & \faCogs  & \faFont & \faCamera  & Average\\
        \midrule
        \midrule
        \multirow{2}{*}{ViT-B-16}&w/o align & 6.67 & 0.00 & 26.67 & 13.33 & 13.3 & 20.00 & 13.33 & 0.00 & 20.00 & 12.59 \\
        &{\cellcolor{Bittersweet!8}}w/ align  & {\cellcolor{Bittersweet!8}}6.67 &{\cellcolor{Bittersweet!8}}0.00 &{\cellcolor{Bittersweet!8}}20.00 &{\cellcolor{Bittersweet!8}}13.33 &{\cellcolor{Bittersweet!8}}20.00 &{\cellcolor{Bittersweet!8}}26.67 &{\cellcolor{Bittersweet!8}}26.67 &{\cellcolor{Bittersweet!8}}0.00 &{\cellcolor{Bittersweet!8}}26.67 &{\cellcolor{Bittersweet!8}}\textbf{15.56}\\
        \midrule
        \multirow{2}{*}{ViT-L-14}&w/o align  & 6.67 & 13.33 & 20.00 & 13.33 & 6.67 & 53.33 & 26.67 & 6.67 & 13.33 & 17.78 \\
        &{\cellcolor{Bittersweet!8}} w/ align  &{\cellcolor{Bittersweet!8}}13.33 &{\cellcolor{Bittersweet!8}}20.00 &{\cellcolor{Bittersweet!8}}20.00 &{\cellcolor{Bittersweet!8}}20.00 &{\cellcolor{Bittersweet!8}}6.67 &{\cellcolor{Bittersweet!8}}53.33 &{\cellcolor{Bittersweet!8}}33.33 &{\cellcolor{Bittersweet!8}}13.33 &{\cellcolor{Bittersweet!8}}20.00 &{\cellcolor{Bittersweet!8}}\textbf{20.74} \\
        \midrule
        \multirow{2}{*}{ViT-L-14-336}&w/o align  & 0.00 & 20.00 & 40.00 & 20.00 & 6.67 & 20.00 & 33.33 & 0.00 & 33.33& 19.26 \\
        &{\cellcolor{Bittersweet!8}}w/ align  &{\cellcolor{Bittersweet!8}}6.67 &{\cellcolor{Bittersweet!8}}26.67 &{\cellcolor{Bittersweet!8}}40.00 &{\cellcolor{Bittersweet!8}}13.33 &{\cellcolor{Bittersweet!8}}6.67 &{\cellcolor{Bittersweet!8}}40.00 &{\cellcolor{Bittersweet!8}}26.67 &{\cellcolor{Bittersweet!8}}13.33 &{\cellcolor{Bittersweet!8}}20.00 & {\cellcolor{Bittersweet!8}}\textbf{21.48} \\
    \bottomrule
    \bottomrule
    \end{tabular}
\end{table}

\subsection{Results on MMVP benchmarks.} We also conducted experiments on the MMVP benchmarks~\cite{tong2024eyes}, which are similar to the MMVP-VLM benchmark but evaluate through VQA on MLLMs. For this experiment, we used LLaVA-1.5-7B with CLIP ViT-L-14-336 as the visual encoder. The results are shown in Table~\ref{tab:mmvp}. We found that the proposed alignment framework significantly enhances the performance of LLaVA, leading to a marked reduction in error rates for these challenging cases. Improvements were observed simply by replacing the vision encoder with the aligned version, and the gains were even more pronounced when we fine-tuned the LLM component as well. These results further confirm that the enhanced visual capabilities of CLIP can be effectively transferred to downstream MLLMs.

\begin{table}[H]
    \caption{\textbf{Performance of LLaVA on MMVP benchmark.} The performance improves when we replace the visual encoder with the aligned version. This indicates that alignment enhances visual capabilities, and this effect can be carried over to downstream tasks.}
    \vspace{2mm}
    \label{tab:mmvp}
    \centering
    \small
    \begin{tabular}{l|cccc}
    \toprule
    \toprule
      Vision Encoder& CLIP & CLIP+align&CLIP+align+peft\\
        \midrule
        Accuracy (\%)&23.33&24.67&\textbf{34.00}\\
    \bottomrule
    \bottomrule
    \end{tabular}
\end{table}

\subsection{Additional Baselines}
In this section, we compare our method with two additional baselines, which also entail integrating knowledge from multiple experts to build a stronger visual encoder. 

\noindent \textbf{Comparison with AM-RADIO~\cite{ranzinger2024radio}.}  AM-RADIO distills knowledge from multiple teacher models into a student model, delivering strong performance across various downstream tasks. However, the original AM-RADIO model is trained on DataComp-1B, a dataset consisting of 13 billion samples. This training process requires computational resources comparable to the pre-training of CLIP, making it highly resource-intensive. In contrast, our proposed kernel-based alignment method achieves strong accuracy and generalizability through fine-tuning on relatively small datasets, such as ImageNet-1k, over just a few epochs. To provide a clearer comparison, we follow the AM-RADIO training protocol and train a model on ImageNet-1k using CLIP and DINOv2 as teacher models (both employing ViT-L-14 backbones), with another ViT-L-14 as the student model. We then evaluate its zero-shot classification performance, as shown in Table~\ref{radio}. The results reveal that AM-RADIO struggles to generalize to out-of-distribution datasets under this setup. In contrast, our alignment method achieves both strong performance and superior generalizability, even with limited data and computational resources.

\begin{table}[H]
\centering
\renewcommand\arraystretch{1}
    \caption{\textbf{Comparison between our alignment fine-tuning and AM-RADIO.} Our method represent better performance when the training data is limited.}
    \label{radio}
\begin{center}
\footnotesize
\resizebox{1\textwidth}{!}{
\begin{tabular}{l|c|ccccccccccc|c}
\toprule
\multirow{2}{*}[-25pt]{\adjustbox{stack=cc}{Vision\\Encoder}}&\multirow{2}{*}[-13pt]{\rotatebox[origin=c]{75}{ImageNet}}&\multicolumn{11}{c|}{Zero-shot Datasets}&\multirow{2}{*}[-25pt]{\adjustbox{stack=cc}{Average\\Zero-shot}} \\
\cline{3-13}
&&\rotatebox[origin=c]{75}{CIFAR10}&\rotatebox[origin=c]{75}{CIFAR100}&\rotatebox[origin=c]{75}{CalTech}&\rotatebox[origin=c]{75}{FER}&\rotatebox[origin=c]{75}{OxfordPets}&\rotatebox[origin=c]{75}{DTD}&\rotatebox[origin=c]{75}{RESISC}&\rotatebox[origin=c]{75}{EuroSAT}&\rotatebox[origin=c]{75}{PCAM}&\rotatebox[origin=c]{75}{ImageNet-S}&\rotatebox[origin=c]{75}{ImageNet-O}\\
\midrule
\midrule
AM-RADIO&75.38&95.04&71.48&78.47&25.66&83.62&45.37&38.48&31.35&50.02&51.34&69.45&58.21\\
{\cellcolor{Bittersweet!8}}Ours&{\cellcolor{Bittersweet!8}}75.52&{\cellcolor{Bittersweet!8}}96.27&{\cellcolor{Bittersweet!8}}77.04&{\cellcolor{Bittersweet!8}}84.32&{\cellcolor{Bittersweet!8}}50.31&{\cellcolor{Bittersweet!8}}93.40&{\cellcolor{Bittersweet!8}}55.74&{\cellcolor{Bittersweet!8}}63.83&{\cellcolor{Bittersweet!8}}63.83&{\cellcolor{Bittersweet!8}}52.68&{\cellcolor{Bittersweet!8}}59.67&{\cellcolor{Bittersweet!8}}34.90&{\cellcolor{Bittersweet!8}}66.54\\
\bottomrule
\bottomrule
\end{tabular}}
\end{center}
\end{table}

\noindent \textbf{Comparison with Additive-MoF~\cite{tong2024eyes}.}
We also compare our method with Additive-MoF, a straightforward approach to combining visual representations from different visual encoders for MLLM applications. This comparison is conducted on vision-language benchmarks using the LLaVA implementation, with the results presented in Table~\ref{mof}. While Additive-MoF achieves better performance in certain tasks, such as object detection, it performs suboptimally in others, such as open-ended visual question answering. As highlighted in the original paper~\cite{tong2024eyes}, simply combining CLIP and DINOv2 features involves an inherent trade-off between visual grounding accuracy and instruction-following capability. In contrast, our alignment fine-tuning approach enhances visual capabilities while maintaining strong alignment with the textual embedding space. Additionally, our framework does not require full re-tuning of the LLM component, making it particularly beneficial in resource-limited settings.

\begin{table*}[tp]
\centering
\renewcommand\arraystretch{1}
    \caption{\textbf{Comparison between Additive-MoF and ours on vision-language benchmarks.} Our alignment fine-tuning approach achieves enhanced visual capabilities without compromising alignment to the textual embedding space}
    \label{mof}
\begin{center}
\footnotesize
\resizebox{1\textwidth}{!}{
\begin{tabular}{l|cccccccccccc|c}
\toprule
\adjustbox{stack=cc}{Vision\\Encoder}&\rotatebox[origin=c]{75}{AI2D}&\rotatebox[origin=c]{75}{text-ocr}&\rotatebox[origin=c]{75}{text-pure}&\rotatebox[origin=c]{75}{pope}&\rotatebox[origin=c]{75}{tallyQA}&\rotatebox[origin=c]{75}{vsr}&\rotatebox[origin=c]{75}{RefCOCOg}&\rotatebox[origin=c]{75}{RefCOCO+}&\rotatebox[origin=c]{75}{RefCOCO}&\rotatebox[origin=c]{75}{vqa-v2}&\rotatebox[origin=c]{75}{llava-wild}&\rotatebox[origin=c]{75}{llava-coco}&Average\\
\midrule
\midrule
Additive-MoF&51.36&45.93&18.66&86.58&63.18&51.47&68.44&65.34&69.69&75.12&41.54&74.50&59.32\\
{\cellcolor{Bittersweet!8}}Ours&{\cellcolor{Bittersweet!8}}53.95&{\cellcolor{Bittersweet!8}}58.89&{\cellcolor{Bittersweet!8}}50.51&{\cellcolor{Bittersweet!8}}86.05&{\cellcolor{Bittersweet!8}}62.14&{\cellcolor{Bittersweet!8}}52.55&{\cellcolor{Bittersweet!8}}58.68&{\cellcolor{Bittersweet!8}}57.74&{\cellcolor{Bittersweet!8}}65.79&{\cellcolor{Bittersweet!8}}76.97&{\cellcolor{Bittersweet!8}}52.20&{\cellcolor{Bittersweet!8}}78.90&{\cellcolor{Bittersweet!8}}62.86\\
\bottomrule
\bottomrule
\end{tabular}}
\end{center}
\end{table*}

\subsection{Analysis of Ablation Studies}
We conduct several ablation studies to examine the effects of several design choices in our framework. The experiments are performed with CLIP ViT-L-14, and average zero-shot accuracy across 11 datasets are used as evaluation metrics.

\begin{figure}[H]
\centering
\includegraphics[width=\textwidth]{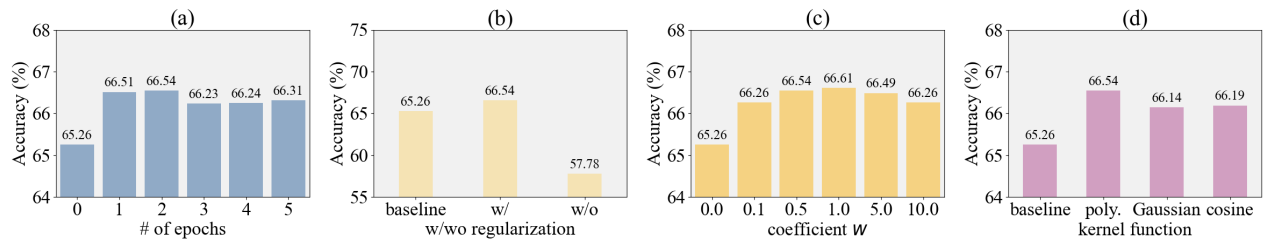}
\caption{\textbf{Ablation studies.} Average zero-shot accuracy across 11 datasets are reported as evaluation metrics: (a) Effects of training epochs; (b) Effects of the regularization term; (c) Effects of the coefficients of alignment; (d) Effects of different kernel functions.}
\label{appablation}
\end{figure}

\label{sec:ablation}
\noindent \textbf{Effects of Training Epochs.} We experiments with fine-tuning for more epochs to see how it affects the performance. The results are shown in Fig.~\ref{appablation} (a). We find that the performance stabilizes after training for 2 epochs. Further training leads to no improvements. This finding highlights the effectiveness of our proposed method, which achieves strong performance with a minimal training time.

\noindent \textbf{Effects of Regularization.} We also conducted an ablation study on the regularization term in Eq.~\ref{eq:loss}, specifically fine-tuning the CLIP model using only the kernel alignment term. The results are shown in Fig.~\ref{appablation} (b). Without this regularization term, we observe a significant drop in performance, with zero-shot accuracy even worse than that of the original CLIP model. This indicates that the regularization is crucial for maintaining alignment with the text embeddings. Only when both terms are utilized does the alignment phase achieve an effective balance between fine-grained visual capability and compatibility with text semantics. 

\noindent \textbf{Effects of Coefficients.} We then tested various coefficients $w$ for the alignment term in Eq.~\ref{eq:loss}, experimenting with values of 0.1, 0.5, 1.0, 5.0, and 10.0. The results are presented in Fig.~\ref{appablation} (c). We found that performance remains relatively stable across these coefficient changes. In our experiments, a coefficient of 1.0 yielded the best performance, effectively balancing alignment with the DINOv2 representation and compatibility with the text encoder.

\noindent \textbf{Selection of Kernel Function.} For the main experiment, we employed a normalized polynomial kernel for alignment and also tested the Gaussian kernel and cosine kernel, with results shown in Fig.~\ref{appablation} (d). Our findings indicate that all three kernels lead to performance improvements, with the polynomial kernel achieving the best results. The Gaussian kernel encounters gradient vanishing issues, complicating the optimization process, while the cosine kernel lacks additional parameters to flexibly address differences in feature dimensions and structures, making them sub-optimal compared to the polynomial kernel. One potential avenue for future improvement is to explore more diverse kernel functions or to combine multiple kernel functions as a compositional kernel for alignment.

\noindent \textbf{Effects of the dataset size.} We investigate how the size of the fine-tuning dataset impacts model performance. To this end, we perform ablation studies using 25\% and 50\% of the ImageNet dataset for fine-tuning. The results, presented in Table~\ref{datasize}, indicate that our alignment fine-tuning consistently enhances model performance, even when only 25\% of the samples are used, demonstrating the method's efficiency. Furthermore, the performance improvement grows as the amount of fine-tuning data increases, highlighting the potential for further gains when larger datasets are utilized.

\begin{table}[H]
\centering
\renewcommand\arraystretch{1}
    \caption{\textbf{Ablation study in terms of fine-tuning data size.} We conduct the alignment fine-tuning with different proportions of ImageNet data. The model demonstrates consistent performance gain, and the performance improves as more data is incorporated.}
    \label{datasize}
\begin{center}
\footnotesize
\resizebox{1\textwidth}{!}{
\begin{tabular}{l|c|ccccccccccc|c}
\toprule
\multirow{2}{*}[-25pt]{\adjustbox{stack=cc}{Proportion}}&\multirow{2}{*}[-13pt]{\rotatebox[origin=c]{75}{ImageNet}}&\multicolumn{11}{c|}{Zero-shot Datasets}&\multirow{2}{*}[-25pt]{\adjustbox{stack=cc}{Average\\Zero-shot}} \\
\cline{3-13}
&&\rotatebox[origin=c]{75}{CIFAR10}&\rotatebox[origin=c]{75}{CIFAR100}&\rotatebox[origin=c]{75}{CalTech}&\rotatebox[origin=c]{75}{FER}&\rotatebox[origin=c]{75}{OxfordPets}&\rotatebox[origin=c]{75}{DTD}&\rotatebox[origin=c]{75}{RESISC}&\rotatebox[origin=c]{75}{EuroSAT}&\rotatebox[origin=c]{75}{PCAM}&\rotatebox[origin=c]{75}{ImageNet-S}&\rotatebox[origin=c]{75}{ImageNet-O}\\
\midrule
\midrule
0\%&74.90&95.20&71.08&83.30&50.00&93.21&55.21&63.35&62.65&52.00&59.59&32.25&65.26\\
{\cellcolor{Bittersweet!8}}25\%&{\cellcolor{Bittersweet!8}}75.08&{\cellcolor{Bittersweet!8}}95.84&{\cellcolor{Bittersweet!8}}74.13&{\cellcolor{Bittersweet!8}}83.89&{\cellcolor{Bittersweet!8}}50.59&{\cellcolor{Bittersweet!8}}93.40&{\cellcolor{Bittersweet!8}}55.32&{\cellcolor{Bittersweet!8}}64.33&{\cellcolor{Bittersweet!8}}61.54&{\cellcolor{Bittersweet!8}}52.02&{\cellcolor{Bittersweet!8}}59.96&{\cellcolor{Bittersweet!8}}35.30&{\cellcolor{Bittersweet!8}}66.03\\
50\%&75.34&95.87&76.05&83.91&50.29&93.59&55.90&64.41&64.00&52.53&59.98&34.75&66.48 \\
{\cellcolor{Bittersweet!8}}100\%&{\cellcolor{Bittersweet!8}}75.52&{\cellcolor{Bittersweet!8}}96.27&{\cellcolor{Bittersweet!8}}77.04&{\cellcolor{Bittersweet!8}}84.32&{\cellcolor{Bittersweet!8}}50.31&{\cellcolor{Bittersweet!8}}93.40&{\cellcolor{Bittersweet!8}}55.74&{\cellcolor{Bittersweet!8}}63.83&{\cellcolor{Bittersweet!8}}63.83&{\cellcolor{Bittersweet!8}}52.68&{\cellcolor{Bittersweet!8}}59.67&{\cellcolor{Bittersweet!8}}34.90&{\cellcolor{Bittersweet!8}}66.54\\

\bottomrule
\bottomrule
\end{tabular}}
\end{center}
\end{table}

\end{document}